\theoremstyle{plain}
\newtheorem{theorem}{Theorem}[section]
\newtheorem{lemma}[theorem]{Lemma}
\theoremstyle{definition}
\newtheorem{definition}[theorem]{Definition}
\newtheorem{assumption}[theorem]{Assumption}
\newtheorem{approximation}[theorem]{Approximation}
\theoremstyle{remark}
\title{Differentially private and decentralized randomized power method}
\author{%
  Julien Nicolas\\
  INSA Lyon and McGill University\\
  Lyon, France \\
  \texttt{julien.nicolas@insa-lyon.fr}
  \And
  César Sabater\\
  INSA Lyon\\
  Lyon, France\\
  \texttt{cesar.sabater@insa-lyon.fr}
  \And
  Mohammed Maouche\\
  INSA Lyon\\
  Lyon, France\\
  \texttt{mohammed.maouche@insa-lyon.fr}
  \And
  Mark Coates\\
  McGill University\\
  Montréal, Canada\\
  \texttt{mark.coates@mcgill.ca}
  \And
  Sonia Ben Mokhtar\\
  INSA Lyon\\
  Lyon, France\\
  \texttt{sonia.ben-mokhtar@insa-lyon.fr}
}
\begin{document}

\maketitle

\begin{abstract}
The randomized power method has gained significant interest due to its simplicity and efficient handling of large-scale spectral analysis and recommendation tasks. However, its application to large datasets containing personal information (e.g., web interactions, search history, personal tastes) raises critical privacy problems. This paper addresses these issues by proposing enhanced privacy-preserving variants of the method. First, we propose a variant that reduces the amount of the noise required in current techniques to achieve Differential Privacy (DP). More precisely, we refine the privacy analysis so that the Gaussian noise variance no longer grows linearly with the target rank, achieving the same $(\varepsilon,\delta)$‑DP guarantees with strictly less noise. Second, we adapt our method to a decentralized framework in which data is distributed among multiple users. The decentralized protocol strengthens privacy guarantees with no accuracy penalty and a low computational and communication overhead. Our results include the provision of tighter convergence bounds for both the centralized and decentralized versions, and an empirical comparison with previous work using real recommendation datasets.
\end{abstract}

\section{Introduction}
The randomized power method has emerged as an efficient and scalable tool for addressing large-scale linear algebra problems central to modern machine learning pipelines.  By constructing an orthonormal basis for a matrix’s range in near-linear time, the method scales seamlessly to practical large datasets. Its reliance on simple matrix products ensures compatibility with sparse data representations and enables efficient parallelization and hardware acceleration on GPUs and distributed architectures.

Beyond its simplicity, the method provides strong approximation guarantees and accelerates a wide spectrum of applications. It has been used for principal component analysis (PCA) \citep{journee2010generalized}, singular value decomposition (SVD) \citep{halko2011finding}, truncated eigen-decompositions \citep{yuan2013truncated}, and matrix completion \citep{feng2018faster}. Extensions have powered recommender systems (e.g., Twitter \citep{gupta2013wtf}, GF-CF \citep{shen2021powerful}, BSPM \citep{shen2021powerful}), PageRank-style ranking \citep{ipsen2005analysis}, PDE solvers \citep{greengard1997new}, or large-scale least-squares and linear-system solvers \citep{rokhlin2008fast}.

However, as these methods are integrated into large-scale machine learning systems involving personal data, protecting user privacy is paramount. Unfortunately, the standard randomized power method does not inherently provide privacy guarantees. While its output might seem less sensitive than the input data, there is no formal guarantee against inference of private information embedded in the data.

To address and quantify privacy leakages, Differential Privacy (DP) has emerged as a powerful framework that provides strong guarantees and mitigates potential privacy leaks of an algorithm, ensuring that the output of an algorithm reveals little about any individual record in the input. Several works have attempted to apply DP to the randomized power method. For example, \citet{hardt2014noisy, balcan2016improved} developed centralized DP variants of the power method, whereas \citet{wang2020principal, guo2024fedpower} investigated federated DP protocols that can be used when data is kept locally across multiple devices. Adjacent works explore DP versions of PCA in both centralized \citep{NEURIPS2022_c150ebe1} and federated~\citep{wang2020principal,briguglio2023federated} settings and give optimal convergence bounds under distributional assumptions.

Despite these advancements, existing approaches suffer from several limitations. First, their performance heavily depends on the number of singular vectors being computed \citep{hardt2014noisy, balcan2016improved, guo2024fedpower, NEURIPS2022_c150ebe1, wang2020principal}, which impacts both utility and privacy guarantees. Second, they are designed for centralized settings \citep{hardt2014noisy, NEURIPS2022_c150ebe1}, where a trusted curator is assumed to hold the data. Moreover, some methods  \citep{grammenos2020federated, NEURIPS2022_c150ebe1} make strong assumptions about the data distribution (e.g., sub-Gaussianity) which makes it harder to use these methods in practice. Some federated versions \citep{briguglio2023federated, hartebrodt2024federated} claim to guarantee privacy due to the federated setting, but it has been shown that decentralization does not offer privacy by design \citep{geiping2020inverting}. 

MOD-SuLQ \citep{chaudhuri2013near} and its federated and streaming PCA variants \citep{grammenos2020federated} offer strong guarantees but are specifically tailored for reconstructing the top principal component ($k=1$). These methods add noise directly to the covariance matrix, rely on direct, computationally costly exact singular value decompositions (SVD) and are tailored for settings for which the number of samples largely exceeds the dimensionality ($n \gg d$). In contrast, the randomized power method iteratively adds noise directly to the approximation of the singular vectors themselves rather than the input matrix. This strategy reduces the dependence on the dimensionality of the original data, making it computationally more efficient and scalable. Memory-limited, streaming PCA methods such as those proposed by \citet{mitliagkas2013memory} are optimized for sequential processing under memory constraints but lack privacy guarantees, requiring additional modifications.

Finally, no fully decentralized versions exist to our knowledge, making them unsuitable for decentralized environments (e.g., recommender systems and social networks), where data is partitioned across users/devices and communications are restricted to a predefined communication graph.  The previously introduced DP federated versions \citep{balcan2016improved, wang2020principal, guo2024fedpower} use public channel communication and therefore require local DP, which hinders convergence. 

\subsection{Contributions}
In this work, we propose a low-noise, privacy-preserving variant of the randomized power method whose approximation error remains bounded regardless of subspace dimension, significantly improving its applicability to practical datasets. We additionally propose a decentralized variant that strengthens privacy guarantees while maintaining the computational efficiency, and approximation guarantees of the centralized method. We summarize our contributions below:
\begin{itemize}[leftmargin=*]
  \item \textbf{An Improved Differentially Private Randomized Power Method.}
    We introduce a Differentially Private variant of the randomized power method under a generalized adjacency model, making it more suitable for practical applications. Unlike prior works, which relied on loose sensitivity bounds extrapolated from one-dimensional vectors, we derive a significantly tighter sensitivity analysis directly applicable to the multidimensional case. We show theoretically with refined convergence bounds and empirically with a recommender system use-case\footnote{We provide an anonymous code repository \href{https://anonymous.4open.science/r/neurips_submission-6C05/README.md}{here}.} that this refinement considerably reduces the amount of noise and the approximation error for a given privacy budget.

  \item \textbf{Decentralized private power method.}
    We propose a decentralized version of our algorithm, allowing settings for which data is distributed across multiple users or devices (e.g., recommender systems). This version is compatible with Secure Aggregation \citep{bell2020secure, bonawitz2017practical, kadhe2020fastsecagg}, or fully decentralized private averaging approaches using correlated noise \citep{sabater2022accurate, allouah2024privacy}. Our decentralized approach preserves the effectiveness of the centralized version while incorporating Distributed Differential Privacy.

  \item \textbf{Rigorous privacy guarantees.}
    We present a new, more straightforward proof of a privacy result for the differentially private randomized power method. This proof addresses some mistakes in a privacy proof from \citet{hardt2012beating,hardt2013beyond} that have been reproduced in several follow-up works \citep{hardt2014noisy, balcan2016improved} and allows for a wider range of DP parameters.
    
\end{itemize}

The remainder of the paper is organized as follows: In Section~\ref{sec:background} we review the necessary background and notation. Section~\ref{proposedbounds} introduces our generalized adjacency notion, derives the improved sensitivity bound (Theorem~\ref{improvedsensitivitybound}), and presents the overall privacy proof (Theorem~\ref{thm:privacyofppm}) together with runtime-dependent convergence guarantees (Theorem~\ref{thm:proposed_upper_bound}). In Section~\ref{runtimeindebounds} we turn this into a fully runtime-independent bound (Theorem~\ref{thm:ippmruninde}). Section~\ref{decentrversion} develops the decentralized variant (Algorithm~\ref{alg:decentralizedpowermethod}), proves its equivalence to the centralized version (Theorem~\ref{privconvdecalgo}), and analyzes its communication and computation overhead. We empirically compare the methods on standard recommendation datasets in Section~\ref{empiricalcomparison}, and conclude in Section~\ref{sec:conclusion} with a discussion of limitations and future directions.

\section{Background Material and Related Work}
\label{sec:background}
\paragraph{Matrix Norms and Notations.}
For any matrix $\bm{X}$, the element-wise maximum norm is defined as $\|\bm{X}\|_{\max} = \max_{i,j} |\bm{X}_{ij}|$, where $\bm{X}_{ij}$ is the $(i,j)$-th element of $\bm{X}$. The $\ell_2$-norm is denoted as $\|\bm{X}\|_2$ and the Frobenius norm as $\|\bm{X}\|_F$. We use $\bm{X}_{j :}$ to denote the the $j$-th row of $\bm{X}$.

\paragraph{Eigenvalue Decomposition.} Let $\bm{A} \in \mathbb{R}^{n \times n}$ be a real-valued positive semi-definite matrix, where $n$ is a positive integer. The eigenvalue decomposition of $\bm{A}$ is given by $\bm{A} = \bm{U} \bm{\Lambda} \bm{U}^{\top}, $ where $\bm{U} \in \mathbb{R}^{n\times n}$ is a matrix of eigenvectors and $\bm{\Lambda}  \in \mathbb{R}^{n\times n}$ is a diagonal matrix containing corresponding eigenvalues.

\paragraph{QR Decomposition.}
We will use the matrix QR decomposition, obtained using the Gram-Schmidt procedure. Given a matrix $\bm{X} \in \mathbb{R}^{n \times p}$, the QR decomposition factorizes it as $\bm{X} = \bm{Q}\bm{R}$, where $\bm{Q} \in \mathbb{R}^{n\times p}$ is an orthonormal matrix (\emph{i.e.}, $\bm{Q}^{\top} \bm{Q} = \bm{I}$) and $\bm{R} \in \mathbb{R}^{p \times p}$ is an upper triangular matrix.

\paragraph{Gaussian Random Matrices.}
We denote by $\mathcal{N}(\mu, \sigma^2)^{n \times p}$ a $(n \times p)$ random matrix where each element is an independent and identically distributed (i.i.d.) random variable according to a Gaussian distribution with mean $\mu$ and variance $\sigma^2$.

\paragraph{Coherence Measures of a Matrix.}
We define below two coherence measures for the matrix $\bm{A}$, which will allow us to give convergence bounds for our method.
\vspace{-3mm}
\begin{itemize}[leftmargin=*]
    \item The $\mu_0$-coherence of $\bm{A}$ is the maximum absolute value of its eigenvector entries, defined as $ \mu_0(\bm{A}) = \|\bm{U}\|_{\max} = \max_{i,j} |\bm{U}_{ij}|$.
    \item The $\mu_1$-coherence of $\bm{A}$ is the maximum row $\ell_2$-norm of its eigenvectors, defined as $\mu_1(\bm{A}) = \|\bm{U}\|_{2,\infty} = \max_{i} (\|\bm{U}_{i:}\|_2)$.
\end{itemize}

\subsection{Differential privacy}
With positive integers $n$ and $m$ specifying the matrix dimensions, let $D_1 \in \mathbb{R}^{n \times m}$ and $D_2 \in \mathbb{R}^{n \times m}$ be two matrices representing two datasets embedding sensitive information. $D_1$ and $D_2$ are said to be adjacent ($D_1 \sim D_2$) if they differ on one sensitive element of the dataset. For example, in a recommender system application, $D_1$ and $D_2$ can be binary user-item interaction matrices and a sensitive element of the dataset can be an entry in the matrix, corresponding to a user-item interaction.

A randomized algorithm $ \mathcal{M} $ provides ($\epsilon, \delta$)-Differential Privacy (DP) if for all adjacent datasets $D_1$ and $ D_2$, and for all measurable subsets $ S \subseteq \text{Range}(\mathcal{M}) $, the following holds:
\begin{align}
\Pr[\mathcal{M}(D_1) \in S] \leq e^{\epsilon} \Pr[\mathcal{M}(D_2) \in S] + \delta,
\end{align}
where $ \epsilon $ is a small positive scalar representing the privacy loss (smaller values indicate stronger privacy guarantees), and $ \delta $ is a (typically negligible) probability  that the privacy guarantee fails.  

Let \(f : \mathbb{R}^{n \times m} \to \mathbb{R}^d\) be a query function associated with a mechanism $ \mathcal{M} $. DP guarantees of mechanisms are defined using the sensitivity of $f$. In our contribution we use the $\ell_2$-sensitivity, denoted by $\Delta_2 (f)$ or $\Delta_2$ and defined as $\Delta_2 = \max_{D_1 \sim D_2} \| f(D_1) - f(D_2) \|_2\,.$

\subsection{Privacy-Preserving Randomized Power Method}
\label{subsec:privpowerit}

Let \(k\) be the target rank, let \(b>0\) be a small positive integer, and let \(\eta>0\) be an approximation tolerance. Given a positive semi-definite (PSD) matrix \(\bm{A}  \in \mathbb{R}^{n\times n}\), the aim of the randomized power method is to construct a matrix \(\bm{Q} \in \mathbb{R}^{n \times p}\), where \(p = k + b\), whose column space approximates that of the top \(k\) eigenvectors  of \(\bm{A}\), \textit{i.e.}, \(\bm{U}_k  \in \mathbb{R}^{n\times k}\). Specifically, it aims to satisfy
\begin{equation}
\label{eq:privpowermethod}
\bigl\|\bm{U}_k - \bm{Q}\bm{Q}^{\top}\bm{U}_k \bigr\| \le \eta.
\end{equation}

To protect sensitive information in \(\bm{A}\) while computing \(\bm{Q}\), prior work
\citep{hardt2014noisy,balcan2016improved} shows that the randomized power method can be implemented as Algorithm~\ref{alg:centralizedpowermethod} for it to satisfy \((\epsilon,\delta)\)-Differential Privacy with adjacency defined as a single element change in $\bm{A}$. The corresponding adjacency notion is defined in more detail in Equation~\eqref{pastupdatemodel}.

To compute the standard deviation of the noise required to satisfy  \((\epsilon,\delta)\)-DP, one needs to bound the \(\ell_2\)-sensitivity \(\Delta_l\) of the computations at each iteration of the algorithm.
Prior works~\citep{hardt2014noisy,balcan2016improved} use the estimate
\begin{equation}
    \label{priorbound}
      \Delta_{l}^{\text{prior}} \triangleq \sqrt{p}\|\bm{X}^l\|_{\max},
\end{equation}
which upper-bounds the true sensitivity, i.e.,\ \(\Delta_l \le \Delta_l^{\text{prior}}\). Here, \(\Delta_l\) bounds the change on
\(\bm{A}\bm{X}^{l-1}\) under a single-element perturbation in \(\bm{A}\).  

Using this bound\footnote{See Fig.~3 of \citet{hardt2014noisy}, Alg.~2 of \citet{balcan2016improved}, and Theorem~6
of \citet{guo2024fedpower}.}, the standard deviation \(\sigma_l\) of the Gaussian noise added at iteration \(l\) to achieve
\((\epsilon,\delta)\)-DP is $\sigma_l =\sqrt{p}\,\bigl\|\bm{X}^l\bigr\|_{\max}\epsilon^{-1}\sqrt{4L\ln(1/\delta)}\,,$ where \(L\) is the total number of power iterations.
\begin{algorithm}[ht]
	\begin{algorithmic}[1]
	\caption{Privacy-preserving randomized power method}
    \label{alg:centralizedpowermethod}
	\STATE{\textbf{Input}: Matrix $\bm A\in\mathbb R^{n\times n}$,  number of iterations $L$, target rank $k$, iteration rank $p\geq k$, 
		privacy parameters $\epsilon,\delta$}
	\STATE{\textbf{Output}: approximated eigen-space $\bm X^L\in\mathbb R^{n\times p}$, with orthonormal columns.}
	\STATE{\textbf{Initialization}}: orthonormal $\bm X^0\in\mathbb R^{n\times p}$ by QR decomposition on a random Gaussian matrix $\bm G_0$;
	noise variance parameter $\sigma=\epsilon^{-1}\sqrt{{4L\log(1/\delta)}}$;\\
	\FOR{$\ell=1$ to $L$}
	\STATE{
		1. Compute $\bm Y_\ell=\bm A\bm X^{\ell-1}+\bm G_\ell$ with $\bm G_\ell\sim \mathcal{N}(0,\sigma_l^2={\Delta}_l^2 \cdot \sigma^2)^{n\times p}$\\
		2. Compute QR factorization $\bm Y_\ell=\bm X^\ell\bm R_\ell$.
	}
	\ENDFOR
	\end{algorithmic}
\end{algorithm}

\subsection{Existing convergence bounds}
\label{existingbounds}
To our knowledge, the strongest convergence bound for the privacy-preserving randomized power method (Algorithm \ref{alg:centralizedpowermethod}) uses the bound $\Delta_l \leq \Delta_l^{prior}$ and is given by \citet{balcan2016improved} in their Corollary 3.1.
The following theorems rely on the conditions in Assumption~\ref{assum_check}, which are enforced by judicious choice of the added noise.

\begin{assumption}
\label{assum_check}
Let  $\bm A\in\mathbb R^{n\times n}$ be a symmetric matrix. Fix a target rank $k$, an intermediate rank $q\geq k$, and an iteration rank $p$, with $k\leq q\leq p$. Let $\bm U_q\in\mathbb R^{n\times q}$ be the 
	top-$q$ eigenvectors of $\bm A$ and let $\lambda_1\geq\cdots\geq\lambda_n\geq 0$ denote
	its eigenvalues. Let us fix
$ \eta = O\left(\frac{\lambda_q}{\lambda_k}\cdot\min\left\{\frac{1}{\log\left(\frac{\lambda_k}{\lambda_q}\right)},\frac{1}{\log\left(\tau n\right)}\right\}\right).$

	Assume that at every iteration $l$ of Algorithm 1, $\bm G_\ell$ satisfies, 	for some constant $\tau>0$:
	\begin{align}
	\|\bm G_\ell\|_2 &= O\left(\eta(\lambda_k-\lambda_{q+1})\right)\,, \quad \text{and}  \quad
	\|\bm U_q^\top\bm G_\ell\|_2 = O\left(\eta\left(\lambda_k - \lambda_{q+1}\right)\frac{\sqrt{p}-\sqrt{q-1}}{\tau\sqrt{n}}\right)\,.
	\end{align}
\end{assumption}

We now restate the Private Power Method major result in \citep{balcan2016improved}.
\begin{theorem}[Private Power Method (PPM), reduction to $s=1$ from the proof in Appendix C.1 from \citet{balcan2016improved}]\label{thm:dppca}
	Let  $\bm A\in\mathbb R^{n\times n}$ be a symmetric data matrix. Fix target rank $k$, intermediate rank $q\geq k$, and iteration rank $p$, with $2q\leq p\leq n$. Suppose the number of iterations $L$ is set as $L=\Theta(\frac{\lambda_k}{\lambda_k-\lambda_{q+1}}\log(n))$.
	Let $\epsilon,\delta\in(0,1)$ be the differential privacy parameters.
Let $\bm U_q\in\mathbb R^{n\times q}$ be the 
	top-$q$ eigenvectors of $\bm A$ and let $\lambda_1\geq\cdots\geq\lambda_n\geq 0$ denote
	its eigenvalues.
	Then Algorithm \ref{alg:centralizedpowermethod} with $\Delta_l = \Delta_l^{prior}$ is ($\epsilon$, $\delta$)-DP and with probability at least 0.9
    \begin{align}
        \| (\bm{I} - \bm{X}^L ({\bm{X}^L})^\top  ) \bm{U}_k\|_2 \leq \eta \quad \text{and} \quad   \| (\bm{I} - \bm{X}^L ({\bm{X}^L})^\top  ) \bm{A}\|_2^2 \leq \lambda_{k+1}^2 + \eta^2  \lambda_k^2
    \end{align}
	$$
	\text{    with } \quad \eta = O\left(\frac{\epsilon^{-1}\max_{l}{(\|\bm{X}^l\|_{\max})}\sqrt{4pLn\log(1/\delta)\log (L)}}{\lambda_k-\lambda_{q+1}}\right)
	$$
    $$
	\text{    and also } \quad \eta = O\left(\frac{\epsilon^{-1}\|\bm U\|_{\max}\sqrt{4pLn\log(1/\delta) \log (n)\log (L)}}{\lambda_k-\lambda_{q+1}}\right).
	$$
\end{theorem}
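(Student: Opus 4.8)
The plan is to split the argument into a privacy part and a utility part, since the theorem asserts both $(\epsilon,\delta)$-DP and the convergence guarantee. For privacy, I would first verify that at each iteration the map $\bm{X}^{\ell-1}\mapsto\bm{A}\bm{X}^{\ell-1}$ has $\ell_2$-sensitivity at most $\Delta_\ell^{\text{prior}}=\sqrt{p}\,\|\bm{X}^\ell\|_{\max}$ under the single-element adjacency on $\bm{A}$, so that adding Gaussian noise $\bm{G}_\ell$ with per-coordinate standard deviation $\Delta_\ell^{\text{prior}}\sigma$ renders each iteration private via the Gaussian mechanism. I would then invoke composition across the $L$ iterations: the choice $\sigma=\epsilon^{-1}\sqrt{4L\log(1/\delta)}$ is calibrated precisely so that the $L$ per-iteration releases compose to the overall $(\epsilon,\delta)$ budget. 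Since the subsequent QR factorization is post-processing of the noisy release, it incurs no additional privacy cost.

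For utility, I would reduce to the general noisy-power-method convergence result of Balcan et al.\ by specializing their block parameter to $s=1$. The core is a recurrence on the principal angle between the iterate subspace $\mathrm{col}(\bm{X}^\ell)$ and the target $\mathrm{col}(\bm{U}_k)$: under Assumption~\ref{assum_check}, each iteration contracts this angle by a factor controlled by $\lambda_{q+1}/\lambda_k$, up to an additive perturbation governed by $\|\bm{G}_\ell\|_2$ and by the projected noise $\|\bm{U}_q^\top\bm{G}_\ell\|_2$. Setting $L=\Theta\!\big(\tfrac{\lambda_k}{\lambda_k-\lambda_{q+1}}\log n\big)$ drives the geometric term below the noise floor, leaving $\|(\bm{I}-\bm{X}^L(\bm{X}^L)^\top)\bm{U}_k\|_2\le\eta$. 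The second inequality then follows deterministically: writing $\bm{A}$ in terms of its top-$k$ eigenspace and its tail, the residual splits into a tail contribution bounded by $\lambda_{k+1}^2$ and a leakage term from the uncaptured top eigenspace bounded by $\eta^2\lambda_k^2$.

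The remaining work is to turn Assumption~\ref{assum_check} into the explicit expression for $\eta$. I would substitute $\bm{G}_\ell\sim\mathcal{N}(0,(\Delta_\ell^{\text{prior}}\sigma)^2)^{n\times p}$ and use standard concentration for Gaussian random matrices: with high probability $\|\bm{G}_\ell\|_2=O(\Delta_\ell^{\text{prior}}\sigma\sqrt{n})$, while a union bound over the $L$ iterations supplies the extra $\sqrt{\log L}$ factor ensuring all iterations satisfy the noise condition simultaneously. Reading off $\eta=O\!\big(\|\bm{G}_\ell\|_2/(\lambda_k-\lambda_{q+1})\big)$ and inserting $\Delta_\ell^{\text{prior}}=\sqrt{p}\,\|\bm{X}^\ell\|_{\max}$ and $\sigma=\epsilon^{-1}\sqrt{4L\log(1/\delta)}$ yields the first stated form. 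For the second form, I would bound $\max_\ell\|\bm{X}^\ell\|_{\max}$ in terms of the coherence $\|\bm{U}\|_{\max}$ of the target eigenvectors, at the cost of an additional $\sqrt{\log n}$ factor.

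The main obstacle I anticipate is the utility recurrence rather than the privacy accounting. Verifying that the per-iteration contraction survives the projection of the noise onto the complementary subspace, and that the two conditions in Assumption~\ref{assum_check} are exactly what keeps the angle decreasing, is delicate; this is the content inherited from Balcan et al., and care is needed so that the $s=1$ specialization does not silently drop a factor carried by the general statement. Matching the high-probability constant $0.9$ likewise requires careful bookkeeping of the failure probabilities of the random-matrix concentration aggregated across all $L$ steps.
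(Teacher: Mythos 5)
Your utility argument follows exactly the route the paper relies on (inherited from Appendix C.1 of \citet{balcan2016improved} and mirrored in the paper's own proof of the analogous Theorem~\ref{thm:proposed_upper_bound} in Appendix~\ref{proofupperboundindep}): specialize the noisy-power-method bound (Theorem~\ref{thm:balcanthm} with Assumption~\ref{assum_check}) to $s=1$, use the Gaussian concentration bounds $\max_{\ell}\|\bm G_\ell\|_2 = O(\sigma_\ell\sqrt{n\log L})$ and $\max_{\ell}\|\bm U_q^\top \bm G_\ell\|_2 = O(\sigma_\ell\sqrt{p\log L})$ to verify the noise conditions simultaneously over all $L$ iterations, read off $\eta = O(\sigma_\ell\sqrt{n\log L}/(\lambda_k-\lambda_{q+1}))$, and substitute $\sigma_\ell = \Delta_\ell^{\text{prior}}\epsilon^{-1}\sqrt{4L\log(1/\delta)}$ with $\Delta_\ell^{\text{prior}}=\sqrt{p}\,\|\bm X^\ell\|_{\max}$; the second form via the incoherence-preservation bound $\max_\ell\|\bm X^\ell\|_{\max}=O(\|\bm U\|_{\max}\sqrt{\log n})$ of \citet{hardt2014noisy} is likewise the intended derivation. (A cosmetic point: the sensitivity at step $\ell$ concerns $\bm A\bm X^{\ell-1}$, so the relevant max-norm is that of $\bm X^{\ell-1}$.)

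The genuine gap is in your privacy paragraph. You assert that $\sigma=\epsilon^{-1}\sqrt{4L\log(1/\delta)}$ is ``calibrated precisely so that the $L$ per-iteration releases compose to the overall $(\epsilon,\delta)$ budget,'' but that is exactly the step this paper identifies as flawed in the lineage you would be reproducing (Appendix~\ref{note:note_proofs}): the composition rules stated in \citet{hardt2012beating,hardt2013beyond} contain a flipped inequality, and their argument concludes $(\epsilon',\delta)$-DP where advanced composition of $L$ mechanisms, each $(\epsilon_0,\delta)$-DP, in fact only yields $(\epsilon', L\delta+\delta')$-DP for some $\delta'>0$. So ``invoke composition'' does not close the argument with the stated noise constant: one must either split the $\delta$ budget between the per-iteration failures and the composition slack (which changes the calibration of $\sigma$), or route the accounting through zCDP as the paper's Theorem~\ref{thm:privacyofppm} does (each step is $\Delta_\ell^2/(2\sigma_\ell^2)$-zCDP, adaptive composition sums the parameters, then convert back to $(\epsilon,\delta)$-DP), at the price of the extra condition $\delta\le\exp(-\epsilon/4)$. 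For the restated theorem with $\epsilon,\delta\in(0,1)$ the paper itself does not reprove the DP claim but defers to \citet{balcan2016improved}; a blind proof that actually wants to establish $(\epsilon,\delta)$-DP cannot take this composition step on faith, and as sketched yours would fail at precisely that point.
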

\section{Proposed Differentially Private Power Method Convergence Bounds}
\label{proposedbounds}
In this section, we present an enhanced Differentially Private Power Method. First, we introduce a generalized definition of adjacency that goes beyond single‐entry changes in a PSD matrix to allow for more applications. Then, we modify the Private Power Method to calibrate DP noise using a new, strictly tighter sensitivity bound, eliminating the $\sqrt p$ (associated to target rank) factor in prior work. We finish by showing analytically that this refinement yields strictly sharper convergence guarantees.

\paragraph{Adjacency notion.} In prior works \citep{hardt2014noisy, balcan2016improved}, two datasets represented by PSD matrices $\bm{A} \in \mathbb{R}^{n \times n}$ and $\bm{A}' \in \mathbb{R}^{n \times n}$ are considered adjacent (denoted $\bm{A} \sim \bm{A}'$) if they differ by a change in a single element, with a Frobenius norm of at most 1:
\begin{equation}
\label{pastupdatemodel}
    \bm{A}' = \bm{A} + c \cdot \bm{e}_i \bm{e}_j^\top,
\end{equation}
where $\bm{e}_i, \bm{e}_j \in \mathbb{R}^n$ are canonical basis vectors, $ c \leq 1 $ represents the magnitude of the change, and $ 0 \leq i, j < n $. This adjacency notion models a sensitive change as a modification to a single element in $\bm{A}$ to protect individual element-wise updates under Differential Privacy. 
However, since $\bm{A}$ is symmetric positive semi-definite and changes must preserve this property, this formulation restricts updates to be diagonal, hindering possible applications. 

We propose a new, more general, notion of adjacency to allow for other types of updates:
\begin{equation}
\label{proposedupdatemodel}
    \bm{A}' = \bm{A} + \bm{C},
\end{equation}
where $\bm{C} \in \mathbb{R}^{n \times n}$ is a symmetric matrix representing the update, subject to \label{conditionadjacency} $\sqrt{\sum_{i=1}^n \| \bm{C}_{i,:} \|_1^2} \leq 1$.

The proposed adjacency notion is strictly more general than Equation~\eqref{pastupdatemodel}. For example, setting $\bm{C} = c \cdot \bm{e}_i \bm{e}_j^\top$ recovers the original definition. $\bm{C}$ can have non-zero entries on the diagonal, anti-diagonal, or any symmetric pattern, allowing for a variety of updates maintaining symmetry.

In the context of recommender systems, where $\bm{A} = \bm{R}^\top \bm{R}$ represents the item-item similarity matrix and $\bm{R} \in \mathbb{R}^{m \times n}$ is the user-item interaction matrix, our proposed notion of adjacency enables element-wise modifications in $\bm{R}$ (\emph{i.e.}, protecting individual user-item interactions). Such changes in $\bm{R}$ propagate to multiple elements of $\bm{A}$, which could not be adequately accounted for under the previous adjacency definition (Equation~\eqref{pastupdatemodel}). By adopting our more general adjacency definition, we make our privacy guarantees applicable to a wider range of real-world scenarios.

\paragraph{Sensitivity bound.}
The previous sensitivity bound $\Delta_l^{\text{prior}}$  \citep{hardt2014noisy, balcan2016improved} for $ \bm{A}\bm{X}^{l-1} $ defined in Equation~\eqref{priorbound} was estimated by extrapolating from the case where $ \bm{X}^{l-1} \in \mathbb{R}^{n \times 1} $ to the general case $ \bm{X}^{l-1} \in \mathbb{R}^{n \times p} $, leading to a dependence on $\sqrt{p}$. This leads to an overestimation of the sensitivity and to unnecessarily large noise addition. We directly analyze the change in $ \bm{X}^{l-1} \in \mathbb{R}^{n \times p} $ and derive a strictly tighter bound on the sensitivity $\Delta_l$. By using our adjacency notion from Equation~\eqref{proposedupdatemodel}, we establish the following result (proof in Appendix~\ref{sec:app_proof1}):

\begin{theorem}[Improved Sensitivity Bound]
\label{improvedsensitivitybound}
Let $ \bm{A}' $ be defined as in Equation~\eqref{proposedupdatemodel}, and consider the sensitivity $\Delta_l = \sup_{\bm{A} \sim \bm{A}'}  \| \bm{A}' \bm{X}^l - \bm{A} \bm{X}^l \|_F$. Then, 
\begin{equation}
\label{upper_bound_max}
    \Delta_l \leq \max_i \| \bm{X}_{i:}^l \|_2 \triangleq \hat{\Delta}_l.
\end{equation}
\end{theorem}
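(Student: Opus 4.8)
The plan is to reduce the sensitivity to a bound on $\|\bm{C}\bm{X}^l\|_F$ and then exploit the row structure of the matrix product. First I would observe that since $\bm{A}' = \bm{A} + \bm{C}$, the expression inside the supremum collapses to $\bm{A}'\bm{X}^l - \bm{A}\bm{X}^l = \bm{C}\bm{X}^l$, so that $\Delta_l = \sup_{\bm{C}} \|\bm{C}\bm{X}^l\|_F$, where the supremum ranges over all symmetric $\bm{C}$ satisfying the adjacency constraint $\sqrt{\sum_{i=1}^n \|\bm{C}_{i:}\|_1^2} \leq 1$ from Equation~\eqref{proposedupdatemodel}.

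Next, I would work row by row. The $r$-th row of the product is $(\bm{C}\bm{X}^l)_{r:} = \sum_{j=1}^n \bm{C}_{rj}\, \bm{X}^l_{j:}$, i.e.\ a linear combination of the rows of $\bm{X}^l$ with coefficients given by the $r$-th row of $\bm{C}$. Applying the triangle inequality and then factoring out the largest row norm yields
\[
\bigl\|(\bm{C}\bm{X}^l)_{r:}\bigr\|_2 \leq \sum_{j=1}^n |\bm{C}_{rj}|\, \|\bm{X}^l_{j:}\|_2 \leq \Bigl(\max_j \|\bm{X}^l_{j:}\|_2\Bigr)\, \|\bm{C}_{r:}\|_1.
\]
This is the crucial step, since it makes the $\ell_1$ norm of the row $\bm{C}_{r:}$ appear naturally, which is precisely the quantity controlled by the adjacency constraint.

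Finally, I would reassemble the Frobenius norm from its rows: $\|\bm{C}\bm{X}^l\|_F^2 = \sum_{r=1}^n \|(\bm{C}\bm{X}^l)_{r:}\|_2^2 \leq \bigl(\max_j \|\bm{X}^l_{j:}\|_2\bigr)^2 \sum_{r=1}^n \|\bm{C}_{r:}\|_1^2$. The adjacency condition bounds the trailing sum by $1$, giving $\|\bm{C}\bm{X}^l\|_F \leq \max_j \|\bm{X}^l_{j:}\|_2 = \hat{\Delta}_l$. Since this bound is independent of $\bm{C}$, taking the supremum over admissible perturbations leaves it unchanged, which completes the argument.

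I expect the main subtlety to be aligning the intermediate inequality with the exact form of the adjacency constraint: the bound must produce the $\ell_1$ row norm $\|\bm{C}_{r:}\|_1$ (rather than an $\ell_2$ row norm) so that the Frobenius sum telescopes precisely into $\sum_r \|\bm{C}_{r:}\|_1^2 \leq 1$, matching the constraint with no slack. The symmetry of $\bm{C}$ plays no role in the upper bound itself—it only governs feasibility of $\bm{A}'$ as a symmetric matrix—so I would not need to invoke it beyond noting that the constraint set is nonempty.
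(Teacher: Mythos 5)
Your proposal is correct and follows essentially the same route as the paper's proof: expand $\bm{A}'\bm{X}^l - \bm{A}\bm{X}^l = \bm{C}\bm{X}^l$, bound each row via the triangle inequality as $\|(\bm{C}\bm{X}^l)_{r:}\|_2 \le \bigl(\max_j \|\bm{X}^l_{j:}\|_2\bigr)\|\bm{C}_{r:}\|_1$, and reassemble the Frobenius norm so the adjacency constraint $\sum_r \|\bm{C}_{r:}\|_1^2 \le 1$ absorbs the residual factor. Your remark that the symmetry of $\bm{C}$ is not needed for the bound itself is also consistent with the paper, which likewise never invokes it.
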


\paragraph{Privacy proof.} Our algorithm achieves differential privacy by iteratively adding calibrated noise at each round of the power method. The total privacy guarantee across iterations is then derived using results from adaptive composition of DP mechanisms, as initially proposed in \citep{bun2016concentrated}, allowing us to precisely quantify the cumulative privacy loss across multiple iterative steps. To clarify the ambiguities or errors present in the previous works (see Appendix~\ref{note:note_proofs}), and ensure that our privacy guarantees are met, we propose a result with a new proof of the Differential Privacy guarantees for our overall algorithm in Theorem~\ref{thm:privacyofppm}, whose proof is in Appendix~\ref{sec:app_proof2}.

\begin{theorem}[Privacy proof for the PPM]
\label{thm:privacyofppm} 
Let $\delta\in(0,1)$ and $\epsilon>0$ such that $\delta \leq \exp{(-\frac{\epsilon}{4})}$. Then, Algorithm~\ref{alg:centralizedpowermethod} with $\Delta_l = \max_i \|\bm{X}_{i:}^l \|_2$ is ($\epsilon, \delta$)-Differentially Private.
\end{theorem}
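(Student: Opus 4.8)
The plan is to read Algorithm~\ref{alg:centralizedpowermethod} as an adaptive composition of $L$ Gaussian mechanisms and to carry out the accounting in the language of zero-concentrated differential privacy (zCDP) from \citep{bun2016concentrated}, converting to $(\epsilon,\delta)$-DP only at the very end. Concretely, I would fix an iteration index $\ell$ and regard the release at that step as the Gaussian mechanism applied to the query $\bm{A}\mapsto\bm{A}\bm{X}^{\ell-1}$, treating the subsequent QR factorization as data-independent post-processing (which cannot increase the privacy loss, so it suffices to reason about $\bm{Y}_\ell$). The first ingredient is the sensitivity: by Theorem~\ref{improvedsensitivitybound}, for a fixed $\bm{X}^{\ell-1}$ the map $\bm{A}\mapsto\bm{A}\bm{X}^{\ell-1}$ has Frobenius sensitivity at most $\hat{\Delta}_\ell=\max_i\|\bm{X}^{\ell-1}_{i:}\|_2$, which is exactly the scale used to calibrate $\bm{G}_\ell$.

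The second, and decisive, ingredient is that the per-iteration zCDP parameter is \emph{independent} of this sensitivity. Since $\bm{G}_\ell$ has entrywise variance $\hat{\Delta}_\ell^2\sigma^2$, the Gaussian mechanism with $\ell_2$-sensitivity $\hat{\Delta}_\ell$ and noise standard deviation $\hat{\Delta}_\ell\sigma$ satisfies $\rho_0$-zCDP with $\rho_0=\hat{\Delta}_\ell^2/(2\hat{\Delta}_\ell^2\sigma^2)=1/(2\sigma^2)$. I would stress that the $\hat{\Delta}_\ell$ factors cancel, so each round contributes the same data-independent $\rho_0=1/(2\sigma^2)$ regardless of how the row norms of $\bm{X}^{\ell-1}$ evolve; adaptive composition of zCDP then gives that the full transcript $(\bm{X}^1,\dots,\bm{X}^L)$ is $\rho$-zCDP with $\rho=L\rho_0=L/(2\sigma^2)$, and the returned $\bm{X}^L$ follows by post-processing.

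The main obstacle I expect is justifying this composition rigorously despite a \emph{data-dependent noise scale}: the calibration $\hat{\Delta}_\ell$ depends on $\bm{X}^{\ell-1}$, which is itself a function of $\bm{A}$ and of all noise drawn in previous rounds, so a priori the noise law at step $\ell$ differs between two neighbours $\bm{A}\sim\bm{A}'$. This is exactly the place where the earlier arguments of \citep{hardt2012beating,hardt2013beyond} are fragile. The resolution I would use is the conditioning/transcript view of adaptive composition: conditioned on any realized prefix $\bm{X}^0,\dots,\bm{X}^{\ell-1}$, the quantity $\hat{\Delta}_\ell$ is fixed and \emph{common} to both the $\bm{A}$ and $\bm{A}'$ branches, so at step $\ell$ one compares two Gaussians of identical covariance whose means differ by at most $\hat{\Delta}_\ell$ in Frobenius norm; the per-step bound $\rho_0=1/(2\sigma^2)$ from the previous paragraph then holds \emph{uniformly over prefixes}, which is precisely the hypothesis the zCDP composition theorem requires. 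I would isolate this as a lemma giving a uniform-over-transcripts per-round guarantee before invoking composition.

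Finally I would convert back to $(\epsilon,\delta)$-DP. Substituting $\sigma=\epsilon^{-1}\sqrt{4L\log(1/\delta)}$ yields $\rho=L/(2\sigma^2)=\epsilon^2/(8\log(1/\delta))$, and the standard zCDP-to-DP conversion states that $\rho$-zCDP implies $(\rho+2\sqrt{\rho\log(1/\delta)},\delta)$-DP (equivalently, $(\epsilon,\delta)$-DP whenever $\delta\ge\exp(-(\epsilon-\rho)^2/(4\rho))$ with $\rho<\epsilon$). The remaining work is routine estimation, and this is the only place where the hypothesis $\delta\le\exp(-\epsilon/4)$, i.e.\ $\log(1/\delta)\ge\epsilon/4$, is consumed: it forces $\rho\le\epsilon/2<\epsilon$, keeping the zCDP parameter small enough relative to $\epsilon$ for the conversion to be valid and to close at the target level. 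I would carry out this last estimate explicitly to pin down the constants, since tracking them carefully there is what determines whether the stated condition on $\delta$ is exactly the one needed.
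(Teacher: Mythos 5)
Your proposal follows exactly the paper's route (Appendix~\ref{sec:app_proof2}): each iteration is a Gaussian mechanism whose noise scale is proportional to the sensitivity bound of Theorem~\ref{improvedsensitivitybound}, so the $\hat{\Delta}_\ell$ factors cancel and each round contributes the data-independent $\rho_0 = 1/(2\sigma^2)$; adaptive composition gives $\rho = \epsilon^2/(8\log(1/\delta))$-zCDP, and Proposition~1.3 of \citet{bun2016concentrated} converts back to approximate DP. One point where you are actually \emph{more} careful than the paper: your conditioning-on-the-transcript lemma for the data-dependent noise scale is precisely the hypothesis that the paper's Lemma~\ref{adaptive_composition} silently requires ($\rho_i$-zCDP in the first argument for every fixed value of the auxiliary inputs), and your observation that $\hat{\Delta}_\ell$ depends only on the previously released iterate, hence is common to both neighbouring branches, is the right justification; the paper just invokes the lemma without comment.

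However, the step you defer as ``routine estimation'' is where both your sketch and the paper's own proof fail, and your claim that $\delta \le e^{-\epsilon/4}$ is ``exactly the one needed'' is incorrect. With $\sigma = \epsilon^{-1}\sqrt{4L\log(1/\delta)}$ one has $\rho\log(1/\delta) = \epsilon^2/8$ \emph{identically}, hence
\begin{align}
\epsilon' \;=\; \rho + 2\sqrt{\rho\log(1/\delta)} \;=\; \rho + \frac{\epsilon}{\sqrt{2}}\,,
\end{align}
so the cross term equals $\epsilon/\sqrt{2} \approx 0.707\,\epsilon$ regardless of $\delta$ --- not $\le \epsilon/2$ as asserted in Appendix~\ref{sec:app_proof2} of the paper. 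Consequently $\rho \le \epsilon/2$ does not close the argument: at the boundary $\log(1/\delta) = \epsilon/4$ one gets $\epsilon' = \epsilon/2 + \epsilon/\sqrt{2} \approx 1.21\,\epsilon > \epsilon$. Plugging $\rho = \epsilon^2/(8\log(1/\delta))$ into your own (correctly stated) inverted condition $\log(1/\delta) \le (\epsilon-\rho)^2/(4\rho)$ gives $\log(1/\delta) \le 2\log(1/\delta)\bigl(1 - \tfrac{\epsilon}{8\log(1/\delta)}\bigr)^2$, i.e.\ the requirement $\log(1/\delta) \ge \epsilon/(8-4\sqrt{2}) \approx 0.427\,\epsilon$, strictly stronger than the theorem's $\log(1/\delta) \ge \epsilon/4$; and for large $\epsilon$ near the boundary even the exact Gaussian-mechanism privacy curve violates $\delta = e^{-\epsilon/4}$, so no tighter zCDP-to-DP conversion rescues the stated constants. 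The repair is either to strengthen the hypothesis to $\delta \le \exp\bigl(-\epsilon/(8-4\sqrt{2})\bigr)$, or to recalibrate the noise to $\sigma = \epsilon^{-1}\sqrt{8L\log(1/\delta)}$, which gives $\rho = \epsilon^2/(16\log(1/\delta))$, cross term exactly $\epsilon/2$, and a valid (weaker) condition $\delta \le e^{-\epsilon/8}$. Carrying out the estimate you postponed would have surfaced this; as written, your proposal inherits the paper's arithmetic slip, which is ironic given that the theorem is advertised as fixing composition errors in prior privacy proofs.
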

\paragraph{Improved convergence bound.} 
Building on Equation~\eqref{upper_bound_max}, we present in Theorem~\ref{thm:proposed_upper_bound} a strictly tighter convergence bound than the one proposed in \citep{balcan2016improved}. Additionally, unlike past proofs \citep{hardt2014noisy, balcan2016improved}, our proposed privacy proof (given in Theorem~\ref{thm:privacyofppm}) does not restrict $\epsilon \leq 1$. We provide the proof in Appendix~\ref{proofupperboundindep}. 

\begin{theorem}[Improved PPM with Runtime-Dependent Bound]\label{thm:proposed_upper_bound}
	Let  $\bm A\in\mathbb R^{n\times n}$ be a symmetric data matrix. Fix target rank $k$, intermediate rank $q\geq k$ and iteration rank $p$ with $2q\leq p\leq n$. Suppose the number of iterations $L=\Theta(\frac{\lambda_k}{\lambda_k-\lambda_{q+1}}\log(n))$.
	Let $\delta\in(0,1)$ and $\epsilon>0$ be privacy parameters such that $\delta \leq \exp{(-\frac{\epsilon}{4})}$. Let $\bm U_k\in\mathbb R^{n\times k}$ be the 
	top-$k$ eigenvectors of $\bm A$ and let $\lambda_1\geq\cdots\geq\lambda_n\geq 0$ denote its eigenvalues. Then Algorithm \ref{alg:centralizedpowermethod} is ($\epsilon$,$\delta$)-DP with $\Delta_l = \max_i \|\bm{X}_{i:}^l \|_2$ and with probability at least 0.9
    \begin{align}
        \| (\bm{I} - \bm{X}^L ({\bm{X}^L})^\top  ) \bm{U}_k\|_2 \leq \eta \quad \text{and} \quad   \| (\bm{I} - \bm{X}^L ({\bm{X}^L})^\top  ) \bm{A}\|_2^2 \leq \lambda_{k+1}^2 + \eta^2  \lambda_k^2
    \end{align}
	\begin{align}
	\text{with } \quad \eta = O\left(\frac{ \epsilon^{-1} \max_{i,l} \|\bm{X}_{i:}^l \|_2 \sqrt{Ln\log(1/\delta)\log (L)}}{\lambda_k-\lambda_{q+1}}\right).
	\end{align}
\end{theorem}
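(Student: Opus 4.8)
The privacy claim is immediate: with the improved sensitivity $\hat{\Delta}_l = \max_i\|\bm{X}_{i:}^l\|_2$ from Theorem~\ref{improvedsensitivitybound} calibrating the per-iteration noise, $(\epsilon,\delta)$-DP follows directly from Theorem~\ref{thm:privacyofppm} (whose only precondition, $\delta\le\exp(-\epsilon/4)$, is assumed here). So the real work is the convergence bound, and the plan is to re-run the noisy-power-method analysis underlying Theorem~\ref{thm:dppca} (Balcan et al., Corollary~3.1) verbatim, observing that this machinery is \emph{agnostic to how the noise was calibrated}: it only requires the Gaussian matrices $\bm{G}_\ell$ to satisfy the two spectral conditions of Assumption~\ref{assum_check}. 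Thus the entire task reduces to translating our specific noise level $\sigma_\ell = \hat{\Delta}_l\,\epsilon^{-1}\sqrt{4L\log(1/\delta)}$ into a valid $\eta$ by inverting those two conditions, and the only place where our result departs from Balcan's is that $\sigma_\ell$ now carries the factor $\max_i\|\bm{X}_{i:}^l\|_2$ rather than $\sqrt{p}\,\|\bm{X}^l\|_{\max}$.

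First I would verify Assumption~\ref{assum_check}. For the first condition, I bound the spectral norm of $\bm{G}_\ell\sim\mathcal{N}(0,\sigma_\ell^2)^{n\times p}$ by standard Gaussian-matrix concentration, $\|\bm{G}_\ell\|_2\le\sigma_\ell(\sqrt{n}+\sqrt{p}+t)$ with probability at least $1-2e^{-t^2/2}$; since $p\le n$ this is $O(\sigma_\ell\sqrt{n})$. For the second condition, I use that $\bm{U}_q$ has orthonormal columns together with rotational invariance of the Gaussian: $\bm{U}_q^\top\bm{G}_\ell\in\mathbb{R}^{q\times p}$ is again i.i.d.\ $\mathcal{N}(0,\sigma_\ell^2)$, so $\|\bm{U}_q^\top\bm{G}_\ell\|_2=O(\sigma_\ell(\sqrt{q}+\sqrt{p}))=O(\sigma_\ell\sqrt{p})$ with high probability. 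A union bound over the $L$ iterations, taking $t=\Theta(\sqrt{\log L})$ so that all $2L$ tail events fail with total probability at most $0.1$, makes every spectral bound hold simultaneously with probability at least $0.9$; absorbing $\sqrt{n}+\sqrt{\log L}=O(\sqrt{n\log L})$ is precisely the origin of the $\sqrt{\log L}$ factor in the final $\eta$.

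Next I would solve for $\eta$. Substituting the two high-probability noise estimates into Assumption~\ref{assum_check}, the first condition forces $\eta=\Omega\!\big(\sigma_\ell\sqrt{n}/(\lambda_k-\lambda_{q+1})\big)$, while the second forces $\eta=\Omega\!\big(\sigma_\ell\sqrt{p}\,\sqrt{n}\,/[(\lambda_k-\lambda_{q+1})(\sqrt{p}-\sqrt{q-1})]\big)$. The hypothesis $2q\le p$ gives $\sqrt{p}-\sqrt{q-1}=\Omega(\sqrt{p})$, so the ratio $\sqrt{p}/(\sqrt{p}-\sqrt{q-1})=O(1)$ and the two requirements coincide in order; choosing $\eta$ at this common scale and inserting $\sigma_\ell=\max_i\|\bm{X}_{i:}^l\|_2\,\epsilon^{-1}\sqrt{4L\log(1/\delta)}$, then maximizing over $l$, yields exactly $\eta=O\!\big(\epsilon^{-1}\max_{i,l}\|\bm{X}_{i:}^l\|_2\sqrt{Ln\log(1/\delta)\log L}\,/(\lambda_k-\lambda_{q+1})\big)$. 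With the subspace bound $\|(\bm{I}-\bm{X}^L(\bm{X}^L)^\top)\bm{U}_k\|_2\le\eta$ in hand, the second inequality $\|(\bm{I}-\bm{X}^L(\bm{X}^L)^\top)\bm{A}\|_2^2\le\lambda_{k+1}^2+\eta^2\lambda_k^2$ follows from the same matrix-perturbation argument used in the existing analysis, which I would reuse unchanged since it is identical to the step in Theorem~\ref{thm:dppca}.

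I expect the main obstacle to be the \emph{adaptive circular dependence}: $\sigma_\ell$ depends on $\bm{X}^l$, which itself is a function of the noise injected in all earlier iterations, so the $\bm{G}_\ell$ are not independent of the calibration and $\max_{i,l}\|\bm{X}_{i:}^l\|_2$ is itself random. The proof sidesteps this by keeping the bound \emph{runtime-dependent}: the noisy-power-method framework needs only the per-step spectral bounds, which I verify conditionally on the realized $\bm{X}^l$ (using that the row norms of a matrix with orthonormal columns satisfy $\|\bm{X}_{i:}^l\|_2\le 1$ to keep everything finite) and then union-bound across steps, while the data-dependent quantity $\max_{i,l}\|\bm{X}_{i:}^l\|_2$ is carried symbolically into the final $\eta$. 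The only other delicate point is the constant bookkeeping needed to confirm that the two conditions of Assumption~\ref{assum_check} yield the \emph{same} order for $\eta$ (which is exactly what $2q\le p$ buys us); once this is checked, the argument is a faithful replay of Balcan et al.\ with the smaller noise level, the genuinely new content being entirely upstream in the tighter $\hat{\Delta}_l$ and the privacy accounting.
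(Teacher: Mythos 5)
Your proposal is correct and follows essentially the same route as the paper's proof: privacy is delegated to Theorem~\ref{thm:privacyofppm}, and convergence is obtained by invoking the noisy-power-method guarantee of Theorem~\ref{thm:balcanthm}, verifying the two spectral conditions of Assumption~\ref{assum_check} via Gaussian concentration uniformly over the $L$ iterations (the source of the $\sqrt{\log L}$ factor), and substituting $\sigma_l=\hat{\Delta}_l\,\epsilon^{-1}\sqrt{4L\log(1/\delta)}$. If anything you are more explicit than the paper, which simply cites the bounds $\max_{l}\|\bm G_l\|\le\sigma_l\sqrt{n\log L}$ and $\max_{l}\|\bm U^\top\bm G_l\|\le\sigma_l\sqrt{p\log L}$ from prior work without spelling out the $\bigl(\sqrt{p}-\sqrt{q-1}\bigr)$ bookkeeping under $2q\le p$ or the adaptive-calibration subtlety that you resolve by conditioning on the realized iterates.
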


\section{Proposed Runtime-Independent Convergence Bound}
We presented in Theorem~\ref{thm:proposed_upper_bound} a convergence bound involving $\max_i \| \bm{X}_{i:} \|_2$, which is only observable during the execution of the algorithm. To provide a more practical analysis, we now derive a runtime-independent convergence bound in Theorem~\ref{thm:ippmruninde} by careful bounding of $\max_i \| \bm{X}_{i:} \|_2$. We provide a proof in Appendix~\ref{sec:app_proof5}. This bound makes it possible to have a tight analysis in two regimes:
\vspace{-2mm}
\begin{itemize}[leftmargin=*]
    \item \textbf{Small $\mu_0$-coherence, small $p$}: Previously proposed in \citet{balcan2016improved}, this bound is useful in a regime with small $\mu_0$ when computing few eigenvectors, with a dependence on $\sqrt{p  \log({n})} \cdot \mu_0(\bm{A})$.
    
    \item \textbf{Larger $\mu_0$ or $p$}: We propose a new bound tailored for the multi-dimensional power method, depending on $\mu_1(\bm{A})$, with a reduced dependence on the number of eigenvectors $p$.
\end{itemize}
We note that we are likely to be in the second regime in practice, as we highlight in Section~\ref{empiricalcomparison}.
\label{runtimeindebounds}
\begin{restatable}{theorem}{ppmruntimeinde}\textbf{Improved PPM with Runtime-Independent Bound}\label{thm:ippmruninde}

	Let  $\bm A\in\mathbb R^{n\times n}$ be a symmetric data matrix. Fix target rank $k$, intermediate rank $q\geq k$ and iteration rank $p$ with $2q\leq p\leq n$. Suppose the number of iterations $L$ is set as $L=\Theta(\frac{\lambda_k}{\lambda_k-\lambda_{q+1}}\log(n))$. Let $\bm U_q\in\mathbb R^{n\times q}$ be the 
	top-$q$ eigenvectors of $\bm A$ and let $\lambda_1\geq\cdots\geq\lambda_n\geq 0$ denote its eigenvalues. Let $\delta\in(0,1)$ and $\epsilon>0$ be privacy parameters such that $\delta \leq \exp{(-\frac{\epsilon}{4})}$ .
	Then
 Algorithm \ref{alg:centralizedpowermethod} is ($\epsilon$,$\delta$)-DP and we have with probability at least 0.9
    \begin{align}
        \| (\bm{I} - \bm{X}^L ({\bm{X}^L})^\top  ) \bm{U}_k\|_2 \leq \eta \quad \text{and} \quad   \| (\bm{I} - \bm{X}^L ({\bm{X}^L})^\top  ) \bm{A}\|_2^2 \leq \lambda_{k+1}^2 + \eta^2  \lambda_k^2
    \end{align}
	\begin{align}
	\text{with } \quad \eta = O\left(\frac{ \epsilon^{-1} \cdot \min(\mu_0(\bm{A})\sqrt{p \cdot  \log({n})}, \mu_1(\bm{A}))  \cdot \sqrt{Ln\log(1/\delta)\log (L)}}{\lambda_k-\lambda_{q+1}}\right).
	\end{align}
\end{restatable}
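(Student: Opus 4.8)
The plan is to leave the convergence machinery of Theorem~\ref{thm:proposed_upper_bound} untouched and merely replace the only runtime-dependent quantity it contains, $\max_{i,l}\|\bm{X}_{i:}^l\|_2$, by an a~priori estimate expressed through the coherence of $\bm{A}$. Concretely, I would show that, with probability at least $0.9$ over the Gaussian noise and the random initialization, all iterates satisfy $\max_{i}\|\bm{X}_{i:}^l\|_2 = O(\min\{\mu_0(\bm{A})\sqrt{p\log n},\,\mu_1(\bm{A})\})$ simultaneously for every $l\le L$; substituting this into the expression for $\eta$ yields the claimed bound, and the $(\epsilon,\delta)$-DP guarantee is inherited verbatim from Theorem~\ref{thm:privacyofppm}. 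Since the noise scale $\sigma_l=\hat{\Delta}_l\,\sigma$ itself depends on the row norms being bounded, I would carry the coherence estimate as an induction hypothesis over $l$, checking at each step that the resulting noise still meets the operator-norm requirements of Assumption~\ref{assum_check}.

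For the first branch of the minimum I would reduce the row $\ell_2$-norm to the entrywise norm through $\|\bm{X}_{i:}^l\|_2 \le \sqrt{p}\,\|\bm{X}^l\|_{\max}$ and then reuse the coherence-preservation argument already implicit in \citet{balcan2016improved}: their passage from the first to the second displayed $\eta$ in Theorem~\ref{thm:dppca} is precisely the statement that $\max_l\|\bm{X}^l\|_{\max}=O(\mu_0(\bm{A})\sqrt{\log n})$ with high probability, the $\sqrt{\log n}$ originating from Gaussian tail bounds on the entries of $\bm{G}_\ell$ together with the contraction of the non-dominant directions under $\bm{A}$. Combining the two inequalities gives $\max_i\|\bm{X}_{i:}^l\|_2=O(\mu_0(\bm{A})\sqrt{p\log n})$, which recovers the small-$\mu_0$, small-$p$ regime.

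The genuinely new estimate is the $\mu_1$ branch, and here I would exploit that the squared row norm is a leverage score, hence a property of the column space alone: $\|\bm{X}_{i:}^l\|_2^2=\bm{e}_i^\top\bm{X}^l(\bm{X}^l)^\top\bm{e}_i$. Writing $\bm{\Pi}_p=\bm{U}_p\bm{U}_p^\top$ for the projector onto the top-$p$ eigenspace, I would split $\bm{X}_{i:}^l=\bm{e}_i^\top\bm{\Pi}_p\bm{X}^l+\bm{e}_i^\top(\bm{I}-\bm{\Pi}_p)\bm{X}^l$. The \emph{signal} term is clean: $\|\bm{e}_i^\top\bm{\Pi}_p\bm{X}^l\|_2\le\|\bm{U}_{p,i:}\|_2\,\|\bm{U}_p^\top\bm{X}^l\|_2\le\|\bm{U}_{i:}\|_2\le\mu_1(\bm{A})$, using that $\bm{U}_p^\top\bm{X}^l$ has operator norm at most $1$ and that a row of the eigenvector submatrix $\bm{U}_p$ is dominated by the corresponding row of the full $\bm{U}$. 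The \emph{residual} term I would control via the operator-norm-to-row-norm inequality $\|\bm{e}_i^\top\bm{M}\|_2\le\|\bm{e}_i\|_2\|\bm{M}\|_2=\|\bm{M}\|_2$, which reduces it to bounding $\|(\bm{I}-\bm{\Pi}_p)\bm{X}^l\|_2$, the sine of the principal angle between the iterate's column space and the top-$p$ eigenspace.

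This residual bound is exactly the step I expect to be the main obstacle: the convergence guarantee underlying Theorem~\ref{thm:proposed_upper_bound} pins down only the top-$q$ directions through the gap $\lambda_k-\lambda_{q+1}$, whereas $\bm{X}^l$ carries $p\ge 2q$ orthonormal columns, so the surplus directions are not a~priori aligned with a low-coherence eigenspace and the trivial bound is merely $O(1)$. I would handle this by invoking the noise conditions of Assumption~\ref{assum_check}, which force $\|\bm{G}_\ell\|_2$ below the spectral gap and thereby keep the component of $\bm{X}^l$ lying outside the top-$p$ eigenspace at the noise-floor level, making $\|(\bm{I}-\bm{\Pi}_p)\bm{X}^l\|_2$ a lower-order term; the isotropy of the Gaussian perturbation is what prevents the surplus directions from concentrating on high-coherence eigenvectors. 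Taking the minimum of the two branches and feeding the resulting coherence estimate back into Theorem~\ref{thm:proposed_upper_bound} completes the proof, with the noise-tail, initialization, and convergence failure events all absorbed into the stated probability $0.9$.
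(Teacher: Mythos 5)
Your skeleton---inherit $(\epsilon,\delta)$-DP verbatim from Theorem~\ref{thm:privacyofppm}, keep the machinery of Theorem~\ref{thm:proposed_upper_bound}, and replace $\max_{i,l}\|\bm{X}_{i:}^l\|_2$ by an a priori coherence estimate---matches the paper, and your first branch (via $\|\bm{X}_{i:}^l\|_2 \le \sqrt{p}\,\|\bm{X}^l\|_{\max}$ plus the coherence-preservation bound $\max_l\|\bm{X}^l\|_{\max}=O(\mu_0(\bm{A})\sqrt{\log n})$ from \citet{balcan2016improved}) is exactly how the paper obtains it. But your $\mu_1$ branch has a genuine gap at precisely the step you flag as the main obstacle: $\|(\bm{I}-\bm{\Pi}_p)\bm{X}^l\|_2$ is \emph{not} a lower-order term, and Assumption~\ref{assum_check} cannot make it one. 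Already at $l=0$ the iterate is the orthonormalization of a Gaussian matrix, and a uniformly random $p$-dimensional subspace of $\mathbb{R}^n$ has largest principal angle with any fixed $p$-dimensional subspace close to $\pi/2$ when $p\ll n$, so the residual is $1-o(1)$ and your induction has no base case. Later iterations do not repair this: the spectral gap $\lambda_k-\lambda_{q+1}$ and the noise conditions only force alignment of the top-$q$ directions, while the surplus $p-q$ columns feel no contraction toward the top-$p$ eigenspace (the eigenvalues $\lambda_{q+1},\lambda_{q+2},\dots$ may be essentially flat), so the best available bound on the residual is $O(1)$ and your estimate degenerates to $\mu_1(\bm{A})+O(1)$, i.e., the trivial constant bound rather than $O(\mu_1(\bm{A}))$ with a controlled constant.

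The repair---and the paper's actual argument---is to project onto the \emph{full} eigenbasis instead of the top-$p$ one, which makes the bound deterministic and dissolves the obstacle entirely. Write $\bm{A}=\bm{U}\bm{D}\bm{U}^\top$ with $\bm{U}\in\mathbb{R}^{n\times n}$ unitary and set $\bm{B}=\bm{U}^\top\bm{X}^l$; since $\bm{X}^l$ is a QR factor with orthonormal columns and $\bm{U}$ is unitary, $\bm{B}$ has orthonormal columns and $\|\bm{B}\|_2=1$, whence $\|\bm{X}_{i:}^l\|_2=\|\bm{B}^\top\bm{U}_{i:}^\top\|_2\le\|\bm{B}\|_2\,\|\bm{U}_{i:}\|_2\le\mu_1(\bm{A})$ for every row $i$ and every iteration $l$, with probability one. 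There is no residual term because the full eigenbasis leaves nothing outside its span, so no induction over $l$, no appeal to Assumption~\ref{assum_check}, and no interaction with the noise calibration is needed---in particular the circularity you were guarding against (the noise scale $\sigma_l=\hat{\Delta}_l\sigma$ depending on the row norms being bounded) never arises, since the row-norm bound holds surely for any matrix with orthonormal columns. Note also that with $\mu_1$ defined through the full $n\times n$ eigenvector matrix, $\mu_1(\bm{A})\le 1$ (indeed the paper records $\Delta_l\le\max_i\|\bm{X}_{i:}^l\|_2\le\mu_1(\bm{A})\le 1$); your leverage-score decomposition was implicitly aiming at a stronger, top-$p$-coherence version of the statement, and that strengthening is exactly what made the residual unmanageable.
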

\section{Decentralized version}
\label{decentrversion}
\begin{table*}[htb]
  \centering
  \caption{Per‐iteration overhead: Secure Aggregation vs. Federated Learning.}
  \label{tbl:overhead_compare}
  \begin{tabular}{lcccc}
    \toprule
      & \multicolumn{2}{c}{Secure Aggregation} 
      & \multicolumn{2}{c}{Federated Learning} \\
    \cmidrule(lr){2-3}\cmidrule(lr){4-5}
      & Client & Server & Client & Server \\
    \midrule
    Communication 
      & $O\bigl(\log(s) + n p\bigr)$ 
      & $O\bigl(s\,[\log(s) + n p]\bigr)$ 
      & $O(n p)$ 
      & $O\bigl(s\,n p\bigr)$ 
      \\
    Computation 
      & $O\bigl(\log^2(s) + n p\log(s)\bigr)$ 
      & $O\bigl(s\,[\log^2(s) + n p\log(s)]\bigr)$ 
      & $O(n p)$ 
      & $O\bigl(s\,n p\bigr)$ 
      \\
    \bottomrule
  \end{tabular}
  \vspace{-3mm}
\end{table*}
In this section, we consider a decentralized setting where the matrix $\bm{A}$ is distributed across multiple clients. Specifically, each client $i$ holds a private matrix $\bm{A}^{(i)} \in \mathbb{R}^{n \times n}$, such that the global matrix is the sum of these local matrices: $\bm{A} = \sum_{i=1}^s \bm{A}^{(i)}$. The goal is for the clients to collaboratively compute an orthonormal basis for the range of $\bm A$, similar to the centralized randomized power method, but without revealing their individual private matrices $\bm A^{(i)}$ to the server or to other clients.

The randomized power method involves linear operations, making it well-suited for parallelization and distributed computation. \citet{balcan2016improved, guo2024fedpower} proposed private and federated power methods using communication over public channels between clients and a server. However, these approaches rely on local Differential Privacy, since the data exchanged can be observed by everyone, which requires high levels of noise to ensure privacy.

To enhance privacy while retaining the benefits of distributed computation, we use Secure Aggregation, a lightweight Multi-Party Computation protocol. Secure Aggregation allows clients to collaboratively compute sums without revealing individual data, enabling the use of distributed DP. This approach offers privacy guarantees similar to central DP and eliminates the need for a trusted curator. Distributed DP has been extensively studied in the literature \citep{ goryczka2013secure, ghazi2019scalable, kairouz2021distributed,chen2021communication, wei2024distributed}.

In distributed DP, each client adds carefully calibrated noise to their local contributions before participating in the Secure Aggregation protocol. The noise is designed such that the sum across clients has a variance comparable to that used in central DP, thereby achieving similar Differential Privacy guarantees without requiring a trusted aggregator, and allowing clients to keep data locally.

We operate under the \textbf{honest-but-curious} threat model, where clients follow the protocol correctly but may attempt to learn information from received data. We also assume that there are no corrupted or dropout users during the computation. For simplicity, this papers neglects the effects of data quantization or errors introduced by modular arithmetic in distributed DP and refer to \citet{kairouz2021distributed} for an implementation taking this into account.

It is also possible to use a fully decentralized DP protocol to perform secure averaging without relying on a central server, as demonstrated by \citet{sabater2022accurate}. Adopting such approaches can further enhance decentralization while maintaining similar utility and communication costs for the method.
\subsection{Decentralized and private power method using distributed DP:}
We introduce a federated version of the Privacy-Preserving Power Method in Algorithm~\ref{alg:decentralizedpowermethod}. This version significantly reduces the noise variance by a factor of $s p \log n \, \| \bm{U} \|_{\infty}^2$ compared to the method in \citet{balcan2016improved}. The improvement comes from the fact that we emulate centralized Differential Privacy (DP) over secure channels, avoiding the higher noise required in local DP settings with public channels, and that we use our generally tighter sensitivity bound.

We now give a simplified definition of Secure Aggregation and demonstrate the equivalence between Algorithm~\ref{alg:decentralizedpowermethod} and our centralized Algorithm~\ref{alg:centralizedpowermethod} in Theorem~\ref{privconvdecalgo} (proof in Appendix~\ref{sec:app_proof5}). 
\begin{definition}
Let $SecAgg({\bm Y_\ell^{(i)}}, \{i | 1{\leq}i{\leq}s\})$ the Secure Aggregation of matrices $\bm Y_\ell^{(i)}$ held  by users indexed by $\{i | 1{\leq}i{\leq}s\})$. It is equivalent to computing $\bm{Y}_\ell = \sum_{i=1}^s \bm Y_\ell^{(i)}$ over secure channels. 
\end{definition}
\begin{algorithm}[ht]
	\begin{algorithmic}[1]
	\caption{Federated private power method}
    \label{alg:decentralizedpowermethod}
	\STATE{\textbf{Input}: distributed matrices $\bm A^{(1)},\cdots,\bm A^{(s)}\in\mathbb R^{n\times n}$, number of iterations $L$, target rank $k$, iteration rank $p\geq k$, 
		private parameters $\epsilon,\delta$.}
	\STATE{\textbf{Output}: approximated eigen-space $\bm X^L\in\mathbb R^{n\times p}$, with orthonormal columns.}
	\STATE{\textbf{Initialization}}: orthonormal $\bm X^0\in\mathbb R^{n\times p}$ by QR decomposition on a random Gaussian matrix $\bm G_0$;
	noise variance parameter $\nu=\epsilon^{-1}\sqrt{\frac{4L\log(1/\delta)}{s}}$;\\
	\FOR{$\ell=1$ to $L$}
	\STATE{
		1. The central node broadcasts $\bm X^{\ell-1}$ to all $s$ computing nodes;\\
		2. Computing node $i$ computes $\bm Y^{(i)}_\ell=\bm A^{(i)}\bm X^{\ell-1}+\bm G^{(i)}_\ell$ with $\bm G^{(i)}_\ell\sim \mathcal{N}(0,\Delta_l^2\nu^2)^{n\times p}$;\\
  
		3. The central node computes with the clients $\bm Y_\ell= SecAgg({\bm Y_\ell^{(i)}}, \{i | 1 \leq i \leq s\})$. \\
  		4. The central node computes QR factorization $\bm Y_\ell=\bm X^\ell\bm R_\ell$.
	}
	\ENDFOR
	\end{algorithmic}
\end{algorithm}
\begin{theorem}[Privacy and utility of Algorithm \ref{alg:decentralizedpowermethod}]
\label{privconvdecalgo}
    The decentralized Privacy-Preserving Power Method (Algorithm~\ref{alg:decentralizedpowermethod}) provides the same privacy guarantees and achieves equivalent utility (in terms of convergence) as its centralized version (Algorithm~\ref{alg:centralizedpowermethod}).
\end{theorem}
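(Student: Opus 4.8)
The plan is to establish Theorem~\ref{privconvdecalgo} by proving the exact distributional equivalence between the aggregate computed in Algorithm~\ref{alg:decentralizedpowermethod} and the noisy product computed in Algorithm~\ref{alg:centralizedpowermethod}. The key observation is that both algorithms are deterministic QR factorizations applied to a random matrix $\bm{Y}_\ell$, so if I can show that the $\bm{Y}_\ell$ produced in the decentralized version has the \emph{same distribution} as the $\bm{Y}_\ell$ produced in the centralized version (conditioned on a common $\bm{X}^{\ell-1}$), then the induced distributions over $\bm{X}^\ell$ coincide at every iteration, and the equivalence follows by a straightforward induction over $\ell = 1, \dots, L$. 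Since the claim bundles both a privacy statement and a utility statement, I would prove the distributional equivalence once and then deduce each consequence from it.

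First I would unfold the decentralized aggregate. By the definition of $SecAgg$, step~3 computes
\begin{align}
\bm{Y}_\ell = \sum_{i=1}^s \bm{Y}_\ell^{(i)} = \sum_{i=1}^s \bigl(\bm{A}^{(i)} \bm{X}^{\ell-1} + \bm{G}_\ell^{(i)}\bigr) = \Bigl(\sum_{i=1}^s \bm{A}^{(i)}\Bigr)\bm{X}^{\ell-1} + \sum_{i=1}^s \bm{G}_\ell^{(i)} = \bm{A}\bm{X}^{\ell-1} + \sum_{i=1}^s \bm{G}_\ell^{(i)},
\end{align}
using the defining decomposition $\bm{A} = \sum_{i=1}^s \bm{A}^{(i)}$. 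The signal term is therefore identical to the centralized one. The crux is the noise term: each $\bm{G}_\ell^{(i)}$ is an independent $\mathcal{N}(0, \Delta_l^2 \nu^2)^{n\times p}$ matrix with $\nu = \epsilon^{-1}\sqrt{4L\log(1/\delta)/s}$, so $\nu^2 = \sigma^2 / s$ where $\sigma = \epsilon^{-1}\sqrt{4L\log(1/\delta)}$ is the centralized noise parameter. Because sums of independent Gaussians are Gaussian with added variances, the entrywise-independent sum satisfies
\begin{align}
\sum_{i=1}^s \bm{G}_\ell^{(i)} \sim \mathcal{N}\bigl(0, \, s \cdot \Delta_l^2 \nu^2\bigr)^{n\times p} = \mathcal{N}\bigl(0, \, \Delta_l^2 \sigma^2\bigr)^{n\times p},
\end{align}
which is exactly the distribution of the centralized noise $\bm{G}_\ell$ in Algorithm~\ref{alg:centralizedpowermethod}. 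Hence $\bm{Y}_\ell$ has the same conditional law in both algorithms.

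With this equivalence in hand, the utility claim is immediate: the sequence $(\bm{X}^\ell)_{\ell=0}^L$ has the same distribution in both algorithms (the initialization is the same QR-of-Gaussian procedure), so the convergence guarantees of Theorem~\ref{thm:proposed_upper_bound} and Theorem~\ref{thm:ippmruninde} transfer verbatim. For privacy, I would argue that the output of Algorithm~\ref{alg:decentralizedpowermethod} is a post-processing of the sequence of aggregates $(\bm{Y}_\ell)_\ell$, whose joint distribution matches that of the centralized mechanism; since the centralized mechanism is $(\epsilon,\delta)$-DP by Theorem~\ref{thm:privacyofppm}, the decentralized one inherits the same guarantee. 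Here I would invoke the honest-but-curious, no-dropout threat model and the ideal-functionality property of $SecAgg$ stated in the Definition, namely that the server and clients learn only the sum $\bm{Y}_\ell$ and nothing about the individual $\bm{Y}_\ell^{(i)}$.

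The main obstacle is the privacy half rather than the utility half. Distributional equivalence of the \emph{outputs} guarantees equal utility trivially, but central DP is a property of what an adversary \emph{observes}, and in the decentralized protocol the adversary's view is not the clean aggregate but whatever $SecAgg$ reveals. The delicate point is that the distributed noise $\sum_i \bm{G}_\ell^{(i)}$ only reconstitutes the full centralized noise variance $\Delta_l^2\sigma^2$ when all $s$ shares are present; under the assumption of no corrupted or dropout users this holds, but the argument must make explicit that the $SecAgg$ ideal functionality leaks nothing beyond $\bm{Y}_\ell$, so that the curious party sees a sample from exactly the centralized mechanism's output distribution. I would state this reliance on the ideal $SecAgg$ abstraction carefully (deferring quantization and modular-arithmetic artifacts as already noted in the text), since the equivalence is exact only in that idealized model.
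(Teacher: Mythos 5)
Your proposal is correct and takes essentially the same route as the paper's proof: both unfold the $SecAgg$ step to write $\bm Y_\ell = \bm A\bm X^{\ell-1} + \sum_{i=1}^s \bm G_\ell^{(i)}$ and observe that the sum of independent Gaussians has variance $\Delta_l^2\, s\nu^2 = \Delta_l^2\sigma^2$, exactly matching the centralized noise, from which both privacy and utility follow. You go somewhat beyond the paper by making explicit the per-iteration induction and, more importantly, the post-processing argument that central DP transfers only because the ideal $SecAgg$ functionality (under honest-but-curious, no-dropout assumptions) reveals nothing beyond the aggregate $\bm Y_\ell$ --- a step the paper leaves implicit after establishing the computational equivalence.
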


\subsection{Communication and computation cost analysis:}
Table~\ref{tbl:overhead_compare} presents the per-iteration communication and computation overhead introduced by both Secure Aggregation~\citep{bell2020secure} and non-secure federated aggregation.
We can see that Secure Aggregation introduces some non-dominant logarithmic terms.

\section{Empirical comparison of the proposed bounds}
We introduced runtime-dependent and runtime-independent convergence bounds for our algorithm. The runtime-dependent bound depends on $\bm{A}$ and the iteratively computed $\bm{X}^l$, whereas the runtime-independent bound (Theorem~\ref{thm:ippmruninde}) only depends on $\bm{A}$. To illustrate the practical impact of our changes, we focus on an application in recommender systems and compare how our algorithms perform in this context. Additionally, we use a statistical approximation to see how the bounds compare at the first step of the algorithm (sketching step), regardless of the application in Appendix~\ref{agnmatrcomp}. 

\subsection{Application to recommender systems}
State-of-the-art recommender systems like GF-CF~\citep{shen2021powerful} and BSPM~\citep{choi2023blurring} utilize singular or eigenvalue decomposition as part of their algorithms. Specifically, they represent the dataset (user-item interactions) using a normalized adjacency matrix $\tilde{\bm{R}}$, from which they compute a normalized item-item matrix $\tilde{\bm{P}}$. To get rid of noise in $\tilde{\bm{R}}$, they compute an ideal low-pass filter based on the top-$p$ eigenvectors of $\tilde{\bm{P}}$ and apply it to $\tilde{\bm{R}}$. Detailed definitions are provided in Appendix~\ref{apprecsys}.

The first four columns of Table~\ref{tbl:stats_datasets} presents key statistics of popular recommendation datasets used in the literature. Depending on the use case, these datasets may be held by a central curator, or distributed among the users of a recommender system. Accordingly, either the centralized PPM (Algorithm~\ref{alg:centralizedpowermethod}) or the decentralized PPM (Algorithm~\ref{alg:decentralizedpowermethod}) can be applied to compute the leading eigenvectors. To demonstrate the practicality of our proposed methods, we focus on a decentralized setting where each user has access to its own ratings and collaborates with other users to compute the desired eigenvectors. $\Delta_l^{prior}=\mu_0(\bm{A})\sqrt{p \cdot  \log({n})}$ denotes the sensitivity bound proposed in \citep{balcan2016improved} while ${\hat{\Delta}_l} = \mu_1(\bm{A})$ uses our multidimensional refinement.

\begin{table}[htb]
    \centering
    \caption{Statistics of datasets}\label{tbl:stats_datasets}
    \begin{tabular}{cccccccc}\toprule
        Dataset     & Users & Items & Interactions & $\mu_0 $& $\mu_1$ & $\Delta_l^{prior} $ (prior) & $ {\hat{\Delta}_l} $ (ours)\\ \midrule
        Amazon-book & 52,643  & 91,599  & 2,984,108      & 0.33 & 0.99 & 1.12 $\times \sqrt{p} $ & 0.99 \\
        MovieLens & 71,567 & 10,677 & 7,972,582      & 0.39 & 1.00 & 1.19 $\times \sqrt{p} $ & 1.00 \\
        EachMovie & 74,425  & 1,649  & 2,216,887      & 0.49 & 1.00 &  1.33 $\times \sqrt{p} $ & 1.00 \\
        Jester & 54,906 & 151 & 1,450,010      & 0.73 & 1.00 & 1.64 $\times \sqrt{p} $ & 1.00 \\
        \bottomrule
    \end{tabular}
\end{table}

\begin{figure}[htb]
\vspace{-5mm}
  \centering
  \subfigure[MovieLens dataset.]{
    \includegraphics[width=0.48\textwidth]{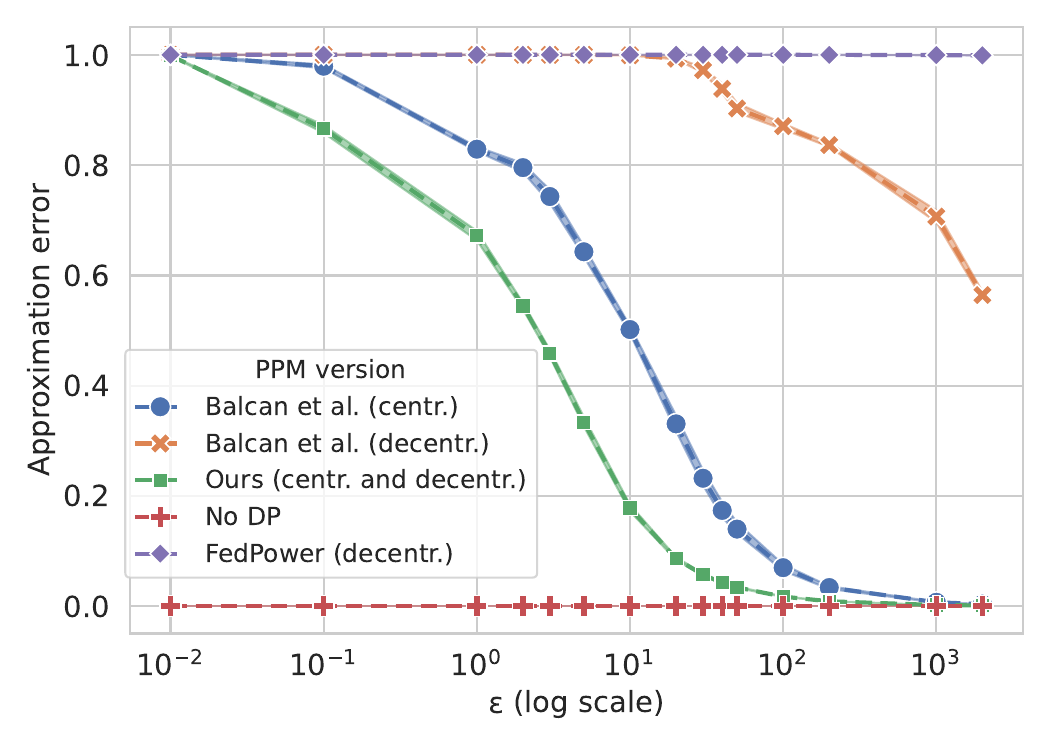}
    \label{fig:frob_movielens}
     \vspace{-3mm}
  }\hfill
  \subfigure[EachMovie dataset.]{
       \vspace{-3mm}
    \includegraphics[width=0.48\textwidth]{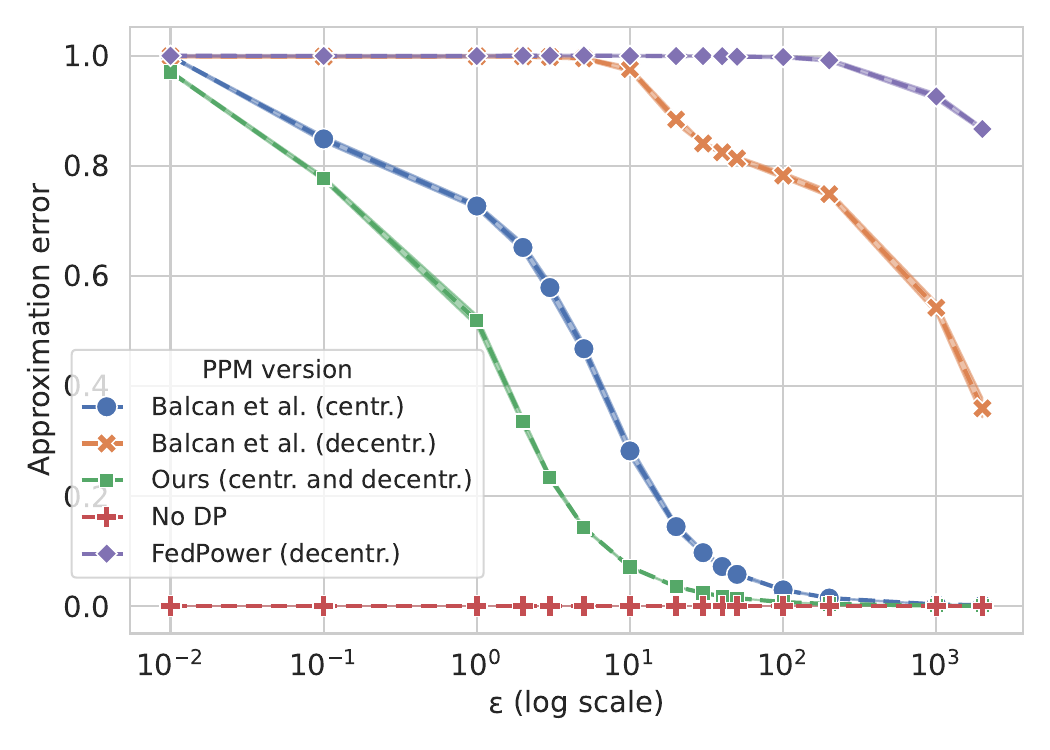}
    \label{fig:frob_eachmovie}
     \vspace{-3mm}
  }
  \vspace{-4mm}
  \centering
  \caption{Comparison of the impact of the Differential Privacy parameter $\epsilon$ on the relative approximation error 
  $
  \frac{\|\tilde{\bm{R}_p} - \bm{R}_p \|_F}{\|\bm{R}_p\|_F},
  $
  where $\tilde{\bm{R}_p}$ represents the approximated matrix and $\bm{R}_p$ the original matrix. Results are means on 10 runs with shaded bands indicating 99\% confidence intervals (computed via bootstrap), shown for the MovieLens and EachMovie datasets, with $p=32$ and $L=3$.}
  \label{fig:frob_comparison}
\end{figure}


\paragraph{Empirical comparison of the runtime-independent bounds.}
\label{empiricalcomparison}
As discussed in Section \ref{runtimeindebounds}, our proposed runtime-independent bound is tighter than those in \citet{hardt2014noisy,balcan2016improved} when $\mu_1(\bm{A}) \leq \mu_0(\bm{A})\sqrt{p \cdot  \log({n})}$, where $\bm{U}$ denotes the eigenvectors of $\bm{A}$. The last two columns from Table~\ref{tbl:stats_datasets} show that this condition holds for popular recommender system datasets. As we can deduce from Table \ref{tbl:stats_datasets}, the proposed bound theoretically allows us to converge to solutions with much smaller $\eta$ values, especially when the desired number of factors $p$ is large.

\paragraph{Practical utility of the proposed algorithms.}
\label{practicalutility}
We saw that the proposed runtime-independent bounds were practically tighter than the previous one for the task of interest. We now demonstrate that Algorithm~\ref{alg:decentralizedpowermethod} can be used to compute the top-$p$ eigenvectors $\bm{U}_p$ of the item-item matrix $\tilde{\bm{P}}$ under ($\epsilon$,$\delta$)-DP (proof in Appendix~\ref{prooflemmaapprecsys}):
\begin{lemma}
Algorithm~\ref{alg:decentralizedpowermethod} with $\Delta_l = \sqrt{2} \max_i \|\bm{X}_{i:}^l \|_2$ can approximate $\bm{U}_p$, the top-$p$ eigenvectors of $\tilde{\bm{P}}$ in a decentralized setting under $(\epsilon, \delta)$-Differential Privacy. 
\end{lemma}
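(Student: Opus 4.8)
The plan is to reduce the recommender-system task to the symmetric problem already covered by Theorems~\ref{improvedsensitivitybound}--\ref{privconvdecalgo}, so that the only genuinely new ingredient is the correct calibration of the sensitivity for the item--item matrix $\tilde{\bm{P}}$. Recall from Appendix~\ref{apprecsys} that $\tilde{\bm{P}} = \tilde{\bm{R}}^\top \tilde{\bm{R}}$, so the top-$p$ eigenvectors $\bm{U}_p$ of $\tilde{\bm{P}}$ coincide with the top-$p$ right singular vectors of the normalized interaction matrix $\tilde{\bm{R}}$. Writing $\bm{r}_u^\top$ for the $u$-th row of $\tilde{\bm{R}}$ (user $u$'s normalized interaction vector), one has the additive decomposition $\tilde{\bm{P}} = \sum_u \bm{r}_u \bm{r}_u^\top$, which is exactly the distributed model required by Algorithm~\ref{alg:decentralizedpowermethod}, with each client $u$ holding the rank-one term $\bm{A}^{(u)} = \bm{r}_u \bm{r}_u^\top$.

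First I would fix the adjacency. A sensitive unit is a single user--item interaction, i.e.\ a single-entry change of $\tilde{\bm{R}}$; after normalization this has magnitude $c \le 1$, and since the normalization guarantees $\|\bm{r}_u\|_2 \le 1$ it is compatible with the generalized adjacency of Equation~\eqref{proposedupdatemodel}. The difficulty is that such a change propagates to $\tilde{\bm{P}}$ as the symmetric, rank-$\le 2$ update $\bm{C} = c\bigl(\bm{r}_u \bm{e}_i^\top + \bm{e}_i \bm{r}_u^\top\bigr) + c^2 \bm{e}_i \bm{e}_i^\top$, whose $\ell_1$ row profile is governed by $\|\bm{r}_u\|_1$ and is therefore too large to feed directly into Theorem~\ref{improvedsensitivitybound}.

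To recover the clean factor $\sqrt{2}$, the key step is to analyze the sensitivity in the bipartite embedding. The top-$p$ right singular vectors of $\tilde{\bm{R}}$ are read off from the eigenvectors of the symmetric dilation
\begin{equation}
\bm{B} = \begin{pmatrix} \bm{0} & \tilde{\bm{R}} \\ \tilde{\bm{R}}^\top & \bm{0} \end{pmatrix},
\end{equation}
on which Algorithm~\ref{alg:decentralizedpowermethod} remains decentralizable, since $\bm{B} = \sum_u \bm{B}^{(u)}$ with $\bm{B}^{(u)}$ the symmetric contribution of client $u$'s row. A single interaction now perturbs only two mirror entries of $\bm{B}$, namely $\bm{C}_B = c\bigl(\bm{e}_u \bm{e}_{m+i}^\top + \bm{e}_{m+i}\bm{e}_u^\top\bigr)$. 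Because $\bm{e}_u$ and $\bm{e}_{m+i}$ are orthogonal, for any iterate $\bm{X}^l$ the contributions add in quadrature and
\begin{equation}
\|\bm{C}_B \bm{X}^l\|_F^2 = c^2\bigl(\|\bm{X}_{u:}^l\|_2^2 + \|\bm{X}_{(m+i):}^l\|_2^2\bigr) \le 2c^2 \max_i \|\bm{X}_{i:}^l\|_2^2,
\end{equation}
so that $\Delta_l \le \sqrt{2}\, \max_i \|\bm{X}_{i:}^l\|_2$ whenever $c \le 1$. This is the heart of the argument and the step I expect to be the main obstacle, since it requires checking that the normalization of $\tilde{\bm{R}}$ keeps $|c| \le 1$ and verifying that the relevant block of $\bm{B}$'s eigenvectors genuinely recovers $\bm{U}_p$ despite the sign-symmetric spectrum of $\bm{B}$ (handled by computing a singular subspace of the appropriate dimension).

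Finally I would assemble the pieces. With $\Delta_l = \sqrt{2}\,\max_i \|\bm{X}_{i:}^l\|_2$ the noise calibrated in Algorithm~\ref{alg:decentralizedpowermethod} matches the true sensitivity, so Theorem~\ref{thm:privacyofppm} yields $(\epsilon,\delta)$-DP for the centralized run, and Theorem~\ref{privconvdecalgo} transfers both the privacy and the convergence guarantee to the decentralized execution over Secure Aggregation. Convergence to $\bm{U}_p$ then follows by applying Theorem~\ref{thm:proposed_upper_bound}, or its runtime-independent form in Theorem~\ref{thm:ippmruninde}, to the embedded symmetric matrix.
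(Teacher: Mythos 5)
There is a genuine gap, and it sits exactly at the step you flagged as the heart of your argument. Your sensitivity computation treats adjacency as a single-entry change of the \emph{already normalized} matrix $\tilde{\bm{R}}$ with fixed normalization, so that $\bm{B}$ is perturbed in only two mirror entries. But the sensitive unit is an interaction in the raw matrix $\bm{R}$, and the normalization $\bm{U}^{-1/2}$ is data-dependent: the paper's proof uses the deletion model, under which removing $r_{vk}$ changes $d_{user}(v)$ to $d_{user}(v)-1$ and thereby rescales \emph{every} entry of user $v$'s row of $\tilde{\bm{R}}$. Consequently the perturbation of $\bm{B}$ (or of $\tilde{\bm{P}}$) is not $c(\bm{e}_u\bm{e}_{m+i}^\top + \bm{e}_{m+i}\bm{e}_u^\top)$ but a full-row update, and your quadrature bound $\|\bm{C}_B\bm{X}^l\|_F^2 \le 2c^2\max_i\|\bm{X}^l_{i:}\|_2^2$ does not cover the actual adjacent pair. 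The paper's proof is essentially entirely devoted to this point: it forms $\bm{C}=\bm{A}-\bm{A}'$ with degrees $d_{user}(v)$ versus $d_{user}(v)-1$, computes $\sum_i\|\bm{C}_{i:}\|_1^2 = \frac{2(d_{user}(v)-1)}{d_{user}(v)^2}+1\le 2$, and then invokes the generalized bound \eqref{finalbound} from Theorem~\ref{improvedsensitivitybound} to get $\Delta_l \le \sqrt{2}\max_i\|\bm{X}^l_{i:}\|_2$. Your $\sqrt{2}$, arising from two orthogonal rank-one pieces, matches the constant only by coincidence. Relatedly, the premise that drove you to the dilation is mistaken: on $\tilde{\bm{P}}$ the row $\ell_1$ profile of the update is \emph{not} too large, because the offending factor $\|\bm{r}_u\|_1 = \sqrt{d_{user}(u)}$ is multiplied by entries of size $1/\sqrt{d_{user}(u)}$; the degree normalization is precisely what makes the direct computation close with a constant bound, which is the route the paper takes.

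A second, unresolved problem is the convergence transfer. Theorems~\ref{thm:proposed_upper_bound} and~\ref{thm:ippmruninde} (via Assumption~\ref{assum_check} and Theorem~\ref{thm:balcanthm}) are stated for symmetric matrices with eigenvalues $\lambda_1\ge\cdots\ge\lambda_n\ge 0$; the dilation $\bm{B}$ is indefinite with sign-symmetric spectrum $\{\pm\sigma_i\}$, so each singular value appears with both signs, the eigengaps required by those theorems pair up or collapse, and the guarantees cannot be invoked on $\bm{B}$ as stated — you acknowledge this ("handled by computing a singular subspace of the appropriate dimension") but do not supply the argument. The dilation also inflates the ambient dimension from $n$ to $s+n$, which worsens the $\sqrt{n}$ factor in the bound and the per-round communication by the number of users, defeating the purpose of running Algorithm~\ref{alg:decentralizedpowermethod} on the $n\times n$ item-item matrix. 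None of this machinery is needed in the paper: $\tilde{\bm{P}}=\sum_u \frac{1}{d_{user}(u)}\bm{R}_u^\top\bm{R}_u$ is already a PSD Gram matrix additively partitioned across users, so once the $\sqrt{2}$ sensitivity is established directly, Theorems~\ref{thm:privacyofppm} and~\ref{privconvdecalgo} apply verbatim — the assembly step of your plan is fine, but the two steps feeding it are not.
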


Both GF-CF~\citep{shen2021powerful} and BSPM~\citep{choi2023blurring} use $\bm{U}_p$ to compute the ideal low pass filter and apply it to the interaction matrix $\bm{R}$, yielding $\bm{R}_p$. To illustrate the practicality of our proposed method, we compute approximations of $\bm{U}_p$ (denoted by $\tilde{\bm{U}}_p$) using either our decentralized PPM, the previous PPM versions in ~\citet{balcan2016improved} or FedPower from \citet{guo2024fedpower}. We then use $\tilde{\bm{U}}_p$ to compute an approximation of $\bm{R}_p$ (denoted by $\tilde{\bm{R}_p}$), and compute the relative approximation error $\frac{\|\tilde{\bm{R}_p} - \bm{R}_p \|_F}{\|\bm{R}_p\|_F}$ associated to each version. We provide experimental details in Appendix~\ref{experimentaldetails}.

Figure~\ref{fig:frob_comparison} illustrates the impact of the privacy parameter $\epsilon$ on the approximation error for the EachMovie and MovieLens datasets, with $p=32$ and $L=3$, clearly showing the advantage of our method over prior work. Our decentralized PPM achieves relative approximation error of $\approx 1/10$ for values of $\epsilon \in (5,10)$ for EachMovie and $\epsilon \approx 20$ for MovieLens. In contrast, other decentralized methods \citep{balcan2016improved, guo2024fedpower} require $\epsilon$ to be of the order of at least $10^3$ to achieve comparable errors, which does not seem to provide meaningful privacy protection.\footnote{We refer to \citet{dwork2019differential} for more practical details on how to set $\epsilon$ and provide additional comparisons and details in Appendix~\ref{approximation_errors}.} \citep{guo2024fedpower} yields the worst relative approximation errors. We hypothesize that this is because it uses worst case, non-adaptive sensitivity bounds for DP, as opposed to our proposed method and those of \citet{balcan2016improved}. For a fixed approximation error, both our centralized and decentralized methods yield $\epsilon$ roughly four times smaller than required by the centralized method of \citet{balcan2016improved}, demonstrating that our propositions significantly strengthen privacy guarantees.

\section{Conclusion }
\label{sec:conclusion}
We presented differentially private versions of the centralized and decentralized randomized power method that addresses privacy concerns in large-scale spectral analysis and recommendation systems. We introduced a new sensitivity bound, which we show theoretically and empirically to improve the accuracy of the method while ensuring privacy guarantees. By employing Secure Aggregation in a decentralized setting, we can reduce the noise introduced for privacy, maintaining the efficiency and privacy of the centralized version but adapting it for distributed environments. Our methods could enable organizations (from healthcare networks to social platforms) to extract useful structure from data without compromising individual records.

{\bf Limitations.} Our privacy guarantees require no dropout from participants to ensure distributed DP, which may not always be realistic. It would also be interesting to analyze a private and decentralized version of the accelerated version of the randomized power method, which could potentially converge faster.



\clearpage 
\bibliography{bibliography}
\bibliographystyle{ACM-Reference-Format}
\appendix
\onecolumn
\section{Proofs of Results}
\label{sec:app_proof}
\subsection{Proof of Theorem~\ref{improvedsensitivitybound}} 
\label{sec:app_proof1}

\begin{proof}

We denote by $\bm{A}'$ a matrix adjacent to $\bm{A}$ using (\ref{proposedupdatemodel}):
\begin{equation}
    \bm{A}' =\bm{A} + \bm{C}, 
\end{equation}
with $\bm{C}$ a symmetric matrix representing the update, subject to $\sqrt{\sum_i \|\bm{C}_{i:}\|_1^2} \leq 1$.

Then 
\begin{equation}
\label{decompositionindep}
\begin{split}
    \Delta_l &= \|\bm{A} \bm{X}^l - \bm{A}'\bm{X}^l \|_F \\
                   &= \|\bm{A}\bm{X}^l - (\bm{A} +  \bm{C}) \bm{X}^l \|_F \\
                   &= \|\bm{C} \bm{X}^l \|_F \\
                   &= \sqrt{\sum_i \|\bm{C}_{i:} \bm{X}^l \|_F^2}.\\ 
\end{split}
\end{equation} and
\begin{equation}
\label{linebound}
\begin{split}
    \|\bm{C}_{i:} \bm{X}^l \|_F &= \| \sum_j \bm{C}_{ij} \cdot \bm{e}_j^\top \cdot \bm{X}^l \|_F\\
                   &= \| \sum_j \bm{C}_{ij} \cdot \bm{X}^l_{j:} \|_F\\
                   & \leq \sum_j \|  \bm{C}_{ij} \cdot \bm{X}^l_{j:} \|_F\\
                   & \leq \sum_j |\bm{C}_{ij}| \cdot  \| \bm{X}^l_{j:} \|_F\\
                   & \leq \max_i \|\bm{X}_{i:}^l\|_F \cdot  \sum_j |\bm{C}_{ij}|. 
\end{split}
\end{equation}
By injecting (\ref{decompositionindep}) into (\ref{linebound}), we have:
\begin{equation}
\label{finalbound}
\begin{split}
    \Delta_l &= \sqrt{\sum_i \|\bm{C}_{i:} \bm{X}^l \|_F^2}\\
            & \leq \sqrt{\sum_i( \max_i \|\bm{X}_{i:}^l\|_F \cdot  \sum_j |\bm{C}_{ij}|)^2} \\
            & \leq \max_i \|\bm{X}_{i:}^l\|_F \sqrt{\sum_i(  \cdot  \sum_j |\bm{C}_{ij}|)^2} \\
            & \leq \max_i \|\bm{X}_{i:}^l\|_F \sqrt{\sum_i \|\bm{C}_{i:}\|_1^2} \\
            & \leq \max_i \|\bm{X}_{i:}^l\|_F \triangleq \hat{\Delta}_l. \\
\end{split}
\end{equation}

Although the proposed update model (\ref{proposedupdatemodel}) is more general than when $\bm{A}'$ is defined using (\ref{pastupdatemodel}),  the proposed bound $\hat{\Delta}_l$ is also generally tighter than the bound proposed in \cite{hardt2014noisy, balcan2016improved}.

\end{proof}
\subsection{Privacy proof}
\label{note:note_proofs}
\subsubsection{Note on related privacy proofs:} 
Several Differential Privacy (DP) proofs for the private randomized power method (PPM) have been developed in prior works, for instance those by \citet{hardt2012beating, hardt2013beyond, hardt2014noisy,balcan2016improved}.
These papers mostly rely on the same proofs to establish that the PPM satisfies ($\epsilon$, $\delta$)-DP. Specifically, \citet{balcan2016improved} references the privacy proof from \citet{hardt2014noisy}, which builds upon the privacy proof from \citet{hardt2013beyond}. The proof therein is also closely related to the one of \citet{hardt2012beating}.

 However, both \citet{hardt2013beyond} (Theorem 2.4) and \citet{hardt2012beating} (Theorem 2.4) contain errors in their proposed composition rules, where a comparison sign is mistakenly flipped. This error could potentially cause the privacy parameter $\epsilon$ in the proposed mechanism to be arbitrarily small, providing no privacy guarantee at all. The original composition rule is presented in Theorem III.3 of \citet{dwork2010boosting}. Moreover, Lemma 3.4 of \citet{hardt2013beyond} misuses Theorem 2.4. Indeed, they claim that their algorithm satisfies $(\epsilon', \delta)$-DP at each iteration. By their Theorem 2.4, then the algorithm overall should satisfy $(\epsilon', k \delta + \delta')$-DP where $\delta' > 0$, and not $(\epsilon', \delta)$-DP as claimed. 

 \subsubsection{Zero-Concentrated Differential Privacy (zCDP):}
A randomized algorithm $ \mathcal{M} $ is said to satisfy $ \rho $-zero-Concentrated Differential Privacy (zCDP) if for all neighboring datasets $ D_1 $ and $ D_2 $, and for all $\alpha \in (1, \infty)$, the following holds:
\begin{align}
D_{\alpha}(\mathcal{M}(D_1) \| \mathcal{M}(D_2)) \leq \rho \alpha\,,
\end{align}
where $ D_{\alpha} $ is the Rényi divergence of order $ \alpha  $ and  $ \rho $ is a positive parameter controlling the trade-off between privacy and accuracy (smaller values of $ \rho $ imply stronger privacy guarantees). 

The following lemma, introduced by~\citet{bun2016concentrated}, specifies how the addition of Gaussian noise can be used to construct a randomized algorithm that satisfies zCDP. 
\begin{lemma}[Gaussian Mechanism (Proposition 1.6 \citep{bun2016concentrated})]
\label{gaussian_mechanism}
    Let $ f : X^n \to \mathbb{R} $ be a sensitivity-$ \Delta $ function. Consider the mechanism (randomized algorithm) $ M : X^n \to \mathbb{R} $, defined as $M(x) = f(x) + Z_x$ where for each input $x$, $Z_x$ is independently drawn from $ \mathcal{N}(0, \sigma^2) $. Then $ M $ satisfies $ \left(\frac{\Delta^2}{2\sigma^2}\right) $-zCDP.
\end{lemma}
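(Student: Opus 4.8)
The plan is to unwind the definition of $\rho$-zCDP and reduce everything to a direct computation of the Rényi divergence between two univariate Gaussians that share a common variance. Fix the sensitivity-$\Delta$ function $f$ and two neighboring inputs $x, x'$, and write $\mu_1 = f(x)$, $\mu_2 = f(x')$, so that $|\mu_1 - \mu_2| \le \Delta$. By construction the output laws are $M(x) \sim \mathcal{N}(\mu_1, \sigma^2)$ and $M(x') \sim \mathcal{N}(\mu_2, \sigma^2)$. It therefore suffices to show that for every $\alpha \in (1,\infty)$ the order-$\alpha$ Rényi divergence satisfies $D_\alpha(\mathcal{N}(\mu_1,\sigma^2) \,\|\, \mathcal{N}(\mu_2,\sigma^2)) = \frac{\alpha(\mu_1-\mu_2)^2}{2\sigma^2}$; bounding $(\mu_1-\mu_2)^2 \le \Delta^2$ then yields $D_\alpha(M(x)\|M(x')) \le \frac{\Delta^2}{2\sigma^2}\,\alpha = \rho\alpha$ with $\rho = \frac{\Delta^2}{2\sigma^2}$, which is exactly the zCDP condition from the definition above.

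For the computation I would start from the integral form $D_\alpha(p\|q) = \frac{1}{\alpha-1}\log\int p(t)^\alpha q(t)^{1-\alpha}\,dt$, insert the two Gaussian densities, and merge the exponentials. The combined exponent is $-\frac{1}{2\sigma^2}\bigl[\alpha(t-\mu_1)^2 + (1-\alpha)(t-\mu_2)^2\bigr]$, whose bracket I would expand and reorganize into a perfect square in $t$ plus a $t$-independent remainder. The coefficient of $t^2$ inside the bracket stays equal to $1$, so after dividing by $2\sigma^2$ the $t$-dependent part is a genuine Gaussian of variance $\sigma^2$; this guarantees that the integral converges for every $\alpha > 1$ and that its $t$-dependent factor integrates to exactly $1$.

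The one piece of real algebra is identifying the $t$-independent remainder. After completing the square around $\bar\mu = \alpha\mu_1 + (1-\alpha)\mu_2$, the remainder equals $\alpha\mu_1^2 + (1-\alpha)\mu_2^2 - \bar\mu^2$, and the key simplification is that this collapses to $\alpha(1-\alpha)(\mu_1-\mu_2)^2$. Plugging back, the whole integral equals $\exp\!\bigl(-\tfrac{\alpha(1-\alpha)(\mu_1-\mu_2)^2}{2\sigma^2}\bigr)$; taking $\log$, dividing by $\alpha-1$, and using $\frac{-\alpha(1-\alpha)}{\alpha-1} = \alpha$, gives the claimed value $\frac{\alpha(\mu_1-\mu_2)^2}{2\sigma^2}$. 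The main obstacle is purely bookkeeping — getting the completion-of-the-square constant right and tracking the sign through the $\frac{1}{\alpha-1}$ factor — rather than anything conceptual; once the remainder is correctly identified as $\alpha(1-\alpha)(\mu_1-\mu_2)^2$, the sensitivity bound closes the proof immediately. I would also remark that, unlike the unequal-variance case, the equal-variance Gaussians here impose no upper restriction on $\alpha$, so the inequality holds on the full range $(1,\infty)$ demanded by the definition of $\rho$-zCDP.
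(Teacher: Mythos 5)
Your proof is correct: the completion-of-the-square remainder $\alpha\mu_1^2+(1-\alpha)\mu_2^2-\bar\mu^2=\alpha(1-\alpha)(\mu_1-\mu_2)^2$ checks out, the normalization constants combine to $(2\pi\sigma^2)^{-1/2}$ so the $t$-dependent factor integrates to exactly $1$ for all $\alpha>1$, and the sensitivity bound then gives $D_\alpha \le \frac{\Delta^2}{2\sigma^2}\alpha$ as required. The paper itself offers no proof of this lemma --- it imports it verbatim as Proposition~1.6 of \citet{bun2016concentrated} --- and your direct computation of the equal-variance Gaussian R\'enyi divergence is precisely the standard argument given in that source, so there is nothing further to reconcile.
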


The next lemma, which is a generalized version of a result presented by~\citet{bun2016concentrated}, explains how a randomized algorithm, constructed by recursively composing a sequence of zCDP-satisying randomized algorithms, also satisfies zCDP. 
\begin{lemma}[Adaptive composition (Generalization from Lemma 2.3 of \citep{bun2016concentrated})]
\label{adaptive_composition}
Let $ M_1 : X^n \to Y_1 $, $ M_2 : X^n \times Y_1 \to Y_2 $, \dots, $ M_L : X^n \times Y_1 \times \dots \times Y_{L-1} \to Y_L $ be randomized algorithms. Suppose $ M_i $ satisfies $ \rho_i $-zCDP as a function of its first argument for each $ i = 1, 2, \dots, L $. Let $ M^{\prime\prime} : X^n \to Y_L $, constructed recursively by:
\begin{align}
M^{\prime\prime}(x) = M_L(x, M_{L-1}(x, \dots, M_2(x, M_1(x)) \dots)).
\end{align}
Then $ M^{\prime\prime} $ satisfies $ (\sum_{i=1}^L \rho_i) $-zCDP.
\end{lemma}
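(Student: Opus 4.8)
The plan is to work entirely through the Rényi-divergence characterization of zCDP and to exploit the multiplicative form of the exponentiated divergence, $e^{(\alpha-1)D_\alpha(P\|Q)} = \mathbb{E}_{y\sim Q}[(P(y)/Q(y))^\alpha]$, which turns composition into a nested expectation that factorizes cleanly. I would first reduce to the two-mechanism case ($L=2$) and then lift to general $L$ by induction; this induction is precisely the ``generalization'' over the two-mechanism statement of \citet{bun2016concentrated}.

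For $L=2$, fix adjacent $D_1,D_2$ and let $P,Q$ denote the laws of the full joint output $(M_1(x),M_2(x,M_1(x)))$ on $Y_1\times Y_2$ at $x=D_1$ and $x=D_2$ respectively. I would factor each joint density as a marginal times a conditional, $P(y_1,y_2)=P_1(y_1)P_{2\mid 1}(y_2\mid y_1)$ and likewise for $Q$, where $P_{2\mid1}(\cdot\mid y_1)$ is the law of $M_2(D_1,y_1)$. Writing $e^{(\alpha-1)D_\alpha(P\|Q)}$ as an expectation over $Q$ and applying the tower property, the integrand splits into the product of the two likelihood-ratio powers; pulling the $y_1$-dependent factor out of the inner integral leaves the inner expectation equal to $e^{(\alpha-1)D_\alpha(M_2(D_1,y_1)\|M_2(D_2,y_1))}$. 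Because $M_2$ is $\rho_2$-zCDP in its first argument for every fixed value $y_1$ of its second argument, this inner term is bounded by $e^{(\alpha-1)\rho_2\alpha}$ uniformly in $y_1$, so it comes out of the outer integral, leaving exactly $e^{(\alpha-1)\rho_2\alpha}\cdot \mathbb{E}_{y_1\sim Q_1}[(P_1(y_1)/Q_1(y_1))^\alpha] = e^{(\alpha-1)\rho_2\alpha}\, e^{(\alpha-1)D_\alpha(M_1(D_1)\|M_1(D_2))}$. Applying $\rho_1$-zCDP of $M_1$ and taking logarithms (dividing by $\alpha-1>0$) yields $D_\alpha(P\|Q)\le(\rho_1+\rho_2)\alpha$ for every $\alpha\in(1,\infty)$, i.e. $(\rho_1+\rho_2)$-zCDP of the joint mechanism. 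The inductive step is identical, treating $M_1,\dots,M_{L-1}$ as a single mechanism that is $(\sum_{i<L}\rho_i)$-zCDP by the inductive hypothesis and composing it with $M_L$. Finally, since $M''$ only emits $Y_L$, which is a deterministic post-processing of the full joint output, and zCDP is closed under post-processing, $M''$ inherits the $(\sum_{i=1}^L\rho_i)$-zCDP guarantee.

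The main obstacle is the correct handling of adaptivity: the conditional law of the later mechanism depends on the realized output $y_1$, and $y_1$ itself has different marginals under $D_1$ and $D_2$. The crux is that the zCDP hypothesis on $M_2$ must be read as a bound that holds pointwise and uniformly over every fixed value of the conditioning output---this uniformity is exactly what allows the inner divergence factor to be extracted from the outer expectation without any coupling between the two levels. I would make this precise by ensuring the conditional Rényi divergence is evaluated at matching $y_1$ on both sides, and by justifying the factorization of the joint densities (and the interchange of integration order) via the tower property for nonnegative integrands, so that measurability and integrability technicalities do not obscure the one-line algebraic identity at the heart of the argument.
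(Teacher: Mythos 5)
Your proof is correct and is essentially the argument the paper relies on: the paper states this lemma without proof, deferring to Lemma 2.3 of \citet{bun2016concentrated}, whose proof is exactly your two-mechanism computation (factor the joint law into marginal and conditional, evaluate the conditional Rényi term at matching $y_1$ on both sides, bound it uniformly by $e^{(\alpha-1)\rho_2\alpha}$ using zCDP of $M_2$ in its first argument, pull it out, and apply the $M_1$ bound), extended to general $L$ by the induction you describe. The only point worth making explicit in a final write-up is that, since $M_L$ reads \emph{all} earlier outputs, the inductive hypothesis must be applied to the mechanism releasing the full tuple $(y_1,\dots,y_{L-1})$ rather than $Y_{L-1}$ alone, with the projection onto $Y_L$ handled, as you correctly do, by closure of zCDP under post-processing.
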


\subsubsection{Proof of Theorem~\ref{thm:privacyofppm}}
\label{sec:app_proof2}

\begin{proof}
We can see lines 4-5 of Algorithm \ref{alg:centralizedpowermethod} as a sequential composition ($M$) of $L$ Gaussian Mechanisms. By Lemma~\ref{gaussian_mechanism}, each mechanism $M_i$ satisfies ($\frac{\Delta_i^2}{2\sigma_i^2}$)-zCDP where $\Delta_i$ is the $\ell_2$-sensitivity of the function associated to mechanism $\Delta_i$ and $\sigma_i^2$ is the variance of the noise added with the Gaussian Mechanism. By Lemma~\ref{adaptive_composition}, the composition of mechanisms $M = (M_1, ... , M_i, ... , M_L)$ satisfies ($\sum_{i=1}^L \frac{\Delta_i^2}{2\sigma_i^2}$)-zCDP. Let $\rho \triangleq \sum_{i=1}^L \frac{\Delta_i^2}{2\sigma_i^2}$. By design of our algorithm, we have:
\begin{align}
    \rho &= \sum_{i=1}^L \frac{\Delta_i^2}{2\sigma_i^2}\\
    & \leq \sum_{i=1}^L \frac{(\hat{\Delta}_l)^2}{2\sigma_i^2}\\
    &= \sum_{i=1}^L \frac{1}{2 \sigma^2} \\
    &= \sum_{i=1}^L \frac{\epsilon^{2}}{ {8L\log(1/\delta)}} \\
    &=  \frac{\epsilon^{2}}{ {8\log(1/\delta)}}. 
\end{align}
By Proposition 1.3 of \citet{bun2016concentrated}, if M provides $\rho$-zCDP, then M is ($\rho + 2 \sqrt{\rho \log(1/\delta)}, \delta)$-DP, $ \forall \delta>0$. Let $\epsilon' \triangleq \rho + 2 \sqrt{\rho \log(1/\delta)}$. Then:
\begin{align}
    \epsilon' & = \rho + 2 \sqrt{\rho \log(1/\delta)} \\
    & \leq \frac{\epsilon^{2}}{ {8\log(1/\delta)}} + 2 \sqrt{\frac{\epsilon^{2}}{ {8\log(1/\delta)}}.  \log(1/\delta)} \\
    &\leq \frac{\epsilon^{2}}{ {8\log(1/\delta)}} + \frac{\epsilon}{2}. \\
\end{align}
To satisfy ($\epsilon$, $\delta$)-DP, we need:
\begin{align}
    \epsilon \geq \epsilon' 
    &\impliedby \epsilon \geq \frac{\epsilon^{2}}{  {8\log(1/\delta)}} + \frac{\epsilon}{2} \\
    &\iff \frac{\epsilon}{2} \geq \frac{\epsilon^{2}}{ {8\log(1/\delta)}}  \\    
    &\iff  \epsilon  \leq  4\log(1/\delta) \\   
    &\iff \delta \leq \exp{(-\frac{\epsilon}{4})},   
\end{align}
which is a reasonable assumption, as in practice $\epsilon = O(1)$ and $\delta \ll \frac{1}{d}$, where $d$ is the number of records to protect. In our case, $d=n^2$ because we run the privacy-preserving power method on $\bm{A}\in \mathbb{R}^{n \times n}$.
\end{proof}

\subsection{Proof of Theorem~\ref{thm:proposed_upper_bound}}
\label{proofupperboundindep}
The following theorem from \citet{balcan2016improved} will be useful in our proof:
\begin{theorem}[Bound for the noisy power method (NPM) \citep{balcan2016improved}]\label{thm:balcanthm}
	Let $k\leq q \leq p$ be positive integers. Let $\bm U_q\in\mathbb R^{d\times q}$ be the 
	top-$q$ eigenvectors of a positive semi-definite matrix $\bm A$ and let $\lambda_1\geq\cdots\geq\lambda_d\geq 0$ denote
	its eigenvalues and fix $\eta = O\left(\frac{\lambda_q}{\lambda_k}\cdot\min\left\{\frac{1}{\log\left(\frac{\lambda_k}{\lambda_q}\right)},\frac{1}{\log\left(\tau d\right)}\right\}\right)$.
	If at every iteration $l$ of the NPM $\bm G_\ell$ satisfies Assumption~\ref{assum_check}, then after
	$$
	L = \Theta\left(\frac{\lambda_k}{\lambda_k-\lambda_{q+1}}\log\left(\frac{\tau d}{\eta }\right)\right).
	$$ iterations, with probability at least $1-\tau^{-\Omega(p+1-q)}-e^{-\Omega(d)}$, we have:
	\begin{align}
	\|(\bm I-\bm X^L{\bm X^L}^\top)\bm U_k\|_2\leq\eta \quad \text{and} \quad   \| (\bm{I} - \bm{X}^L ({\bm{X}^L})^\top  ) \bm{A}\|_2^2 \leq \lambda_{k+1}^2 + \eta^2  \lambda_k^2.
	\end{align}
\end{theorem}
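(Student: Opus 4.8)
The plan is to prove the bound by the principal-angle contraction analysis of the noisy power method, showing that the iterate subspace aligns geometrically fast with the target eigenspace down to the noise floor $\eta$. Fix the orthonormal top-$q$ eigenvectors $\bm U_q$, write $\bm U_{q,\perp}$ for an orthonormal basis of the complementary (tail) eigendirections $q+1,\dots,d$, and split $\bm A = \bm U_q\bm\Lambda_q\bm U_q^\top + \bm U_{q,\perp}\bm\Lambda_\perp\bm U_{q,\perp}^\top$ with $\|\bm\Lambda_\perp\|_2=\lambda_{q+1}$. The quantity I track is $\tan\theta_\ell:=\tan\theta_{\max}(\bm U_k,\mathrm{range}(\bm X^\ell))$, whose cosine equals $\sigma_k(\bm U_k^\top\bm X^\ell)$ and whose sine equals $\|(\bm I-\bm X^\ell\bm X^{\ell\top})\bm U_k\|_2$ — exactly the error to be controlled, which is at most $\tan\theta_\ell$. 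The intermediate rank $q\ge k$ is a device: the oversampling $p\ge 2q$ together with the tail being scaled by only $\lambda_{q+1}$ lets the contraction run at the favorable rate set by the gap $\lambda_k-\lambda_{q+1}$ rather than $\lambda_k-\lambda_{k+1}$.

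The core is a one-step recursion. Writing $\bm Y_\ell=\bm A\bm X^{\ell-1}+\bm G_\ell$ and using that orthonormalization preserves the column space (so $\mathrm{range}(\bm X^\ell)=\mathrm{range}(\bm Y_\ell)$ and $\bm R_\ell$ never enters the angle), I project $\bm Y_\ell$ onto the top-$k$ eigendirections and onto the tail. Multiplication by $\bm A$ scales the aligned top-$k$ mass by at least $\lambda_k$ and the tail mass by at most $\lambda_{q+1}$, while the additive perturbation is governed by $\|\bm G_\ell\|_2$ and by $\|\bm U_q^\top\bm G_\ell\|_2$ (noting $\|\bm U_k^\top\bm G_\ell\|_2\le\|\bm U_q^\top\bm G_\ell\|_2$). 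Combining the $\lambda_{q+1}$-leakage numerator with the denominator $\lambda_k\sigma_k(\bm U_k^\top\bm X^{\ell-1})-\|\bm U_q^\top\bm G_\ell\|_2$ yields
\begin{equation}
\tan\theta_\ell \;\le\; \frac{\lambda_{q+1}}{\lambda_k}\,\tan\theta_{\ell-1} \;+\; \frac{\|\bm G_\ell\|_2+\|\bm U_q^\top\bm G_\ell\|_2\,\tan\theta_{\ell-1}}{\lambda_k\,\sigma_k(\bm U_k^\top\bm X^{\ell-1})}.
\end{equation}
The two conditions of Assumption~\ref{assum_check} are tailored to this recursion: $\|\bm G_\ell\|_2=O(\eta(\lambda_k-\lambda_{q+1}))$ pins the additive noise floor at $O(\eta)$, and $\|\bm U_q^\top\bm G_\ell\|_2=O(\eta(\lambda_k-\lambda_{q+1})\tfrac{\sqrt p-\sqrt{q-1}}{\tau\sqrt d})$, paired with a uniform lower bound on $\sigma_k(\bm U_k^\top\bm X^{\ell-1})$, keeps the aligned subspace from collapsing. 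A case split then closes the step: while $\tan\theta_{\ell-1}\ge\eta$ the deterministic factor $\lambda_{q+1}/\lambda_k<1$ dominates and the angle contracts geometrically, and once $\tan\theta_{\ell-1}\le\eta$ the noise floor holds it at $O(\eta)$.

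For the base case, $\bm X^0=\mathrm{orth}(\bm G_0)$ with $\bm G_0$ Gaussian, so $\bm U_q^\top\bm G_0$ is a $q\times p$ Gaussian matrix; a non-asymptotic smallest-singular-value estimate gives $\sigma_k(\bm U_k^\top\bm X^0)\ge\sigma_q(\bm U_q^\top\bm X^0)\gtrsim\tfrac{\sqrt p-\sqrt{q-1}}{\tau\sqrt d}$ and $\tan\theta_0=O(\tau\sqrt d)$ with probability at least $1-\tau^{-\Omega(p+1-q)}-e^{-\Omega(d)}$, which is the origin of the stated success probability and also supplies the uniform denominator control used above. Iterating the recursion from $\tan\theta_0=O(\tau\sqrt d)$, the geometric factor reaches the $O(\eta)$ floor after $L=\Theta\big(\tfrac{\lambda_k}{\lambda_k-\lambda_{q+1}}\log\tfrac{\tau d}{\eta}\big)$ steps, since $\big(\log(\lambda_k/\lambda_{q+1})\big)^{-1}\asymp \lambda_k/(\lambda_k-\lambda_{q+1})$, giving $\|(\bm I-\bm X^L\bm X^{L\top})\bm U_k\|_2\le\sin\theta_L\le\tan\theta_L\le\eta$. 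For the spectral bound, I decompose $\bm A=\bm U_k\bm\Lambda_k\bm U_k^\top+\bm A_{\mathrm{tail}}$: the tail contributes at most $\lambda_{k+1}^2$ to $\|(\bm I-\bm X^L\bm X^{L\top})\bm A\|_2^2$, and the $\eta$-misalignment of the captured top-$k$ block contributes at most $\eta^2\lambda_k^2$, giving $\lambda_{k+1}^2+\eta^2\lambda_k^2$.

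The main obstacle is the one-step recursion, specifically the uniform control of the denominator $\sigma_k(\bm U_k^\top\bm X^{\ell-1})$: I must show the aligned subspace provably never collapses under repeated noise injection, and then stitch the two regimes of the case split so that the contraction holds uniformly across all $L$ iterations. The other delicate ingredient is the base-case rectangular-Gaussian singular-value estimate, which both seeds the induction and fixes the $\tau^{-\Omega(p+1-q)}$ failure probability through the oversampling factor $\sqrt p-\sqrt{q-1}$.
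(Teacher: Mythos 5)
First, note that the paper itself offers no proof of this statement: Theorem~\ref{thm:balcanthm} is imported verbatim from \citet{balcan2016improved} and used as a black box in the proof of Theorem~\ref{thm:proposed_upper_bound}, so your attempt has to be measured against the cited proof (itself a refinement of \citet{hardt2014noisy}). Much of your scaffolding agrees with that lineage: the tangent potential, the contraction/noise-floor case split, the Gaussian base case giving $\sigma_q(\bm U_q^\top\bm X^0)\gtrsim(\sqrt p-\sqrt{q-1})/(\tau\sqrt d)$ together with the failure probability $\tau^{-\Omega(p+1-q)}+e^{-\Omega(d)}$, and the transfer from $\|(\bm I-\bm X^L{\bm X^L}^\top)\bm U_k\|_2\le\eta$ to the spectral bound (though as written your last step only yields $(\eta\lambda_k+\lambda_{k+1})^2$; to get the stated sum of squares you must split a unit vector into its $\bm U_k$ and complementary components and apply Cauchy--Schwarz to the pair $(\eta\lambda_k,\lambda_{k+1})$ --- minor and fixable).

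The genuine gap is the one-step recursion, which is incorrect as stated. Your potential is the tangent of the largest principal angle between $\bm U_k$ and $\mathrm{range}(\bm X^\ell)$, and you claim its deterministic part contracts by $\lambda_{q+1}/\lambda_k$. But the orthogonal complement of $\bm U_k$ contains the middle eigendirections $k+1,\dots,q$, which $\bm A$ amplifies by factors as large as $\lambda_{k+1}$, so the honest rank-$k$ recursion contracts only at rate $\lambda_{k+1}/\lambda_k$ --- exactly the weak gap dependence the theorem is designed to avoid; your step ``project onto the top-$k$ eigendirections and onto the tail'' silently drops this middle band. If you repair it by tracking the mixed ratio $\max_{w\neq 0}\|\bm U_{q,\perp}^\top\bm X w\|_2/\|\bm U_k^\top\bm X w\|_2$ (strict tail over head), the numerator does scale by $\lambda_{q+1}$, but that quantity is no longer the tangent of the rank-$k$ angle and no longer upper-bounds $\|(\bm I-\bm X^L{\bm X^L}^\top)\bm U_k\|_2$ without a separate argument that the middle mass --- which does not contract at all when $\lambda_{k+1}\approx\lambda_k$ --- is absorbed by the iterate subspace. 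This is precisely where \citet{balcan2016improved} need the oversampling $p\ge 2q$ and, critically, the hypothesis $\eta=O\bigl(\tfrac{\lambda_q}{\lambda_k}\min\{1/\log(\lambda_k/\lambda_q),\,1/\log(\tau d)\}\bigr)$, which enters because their contraction actually runs at the rank-$q$ level (denominator $\sigma_q(\bm U_q^\top\bm X^{\ell-1})$, head gain $\lambda_q$, whence the $\lambda_q/\lambda_k$ factor) before being transferred to $\bm U_k$. Your sketch never invokes that hypothesis --- a telltale sign the middle band has been swept away --- and you yourself flag the uniform lower bound on the denominator as unresolved; that bound is not a technicality but the inductive heart of the cited proof (the condition $\|\bm U_q^\top\bm G_\ell\|_2=O\bigl(\eta(\lambda_k-\lambda_{q+1})(\sqrt p-\sqrt{q-1})/(\tau\sqrt d)\bigr)$ in Assumption~\ref{assum_check} exists precisely to propagate it from the Gaussian base case). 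So the architecture is right, but the two load-bearing steps --- a $\lambda_{q+1}$-rate recursion for a quantity that provably controls the rank-$k$ error, and the non-collapse of its denominator across all $L$ iterations --- are missing, and the first is false in the form you gave.
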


We now provide the proof of Theorem~\ref{thm:proposed_upper_bound}:
\label{sec:app_proof3}
\begin{proof}
    According to \cite{hardt2014noisy}, if $\bm{G}_l \sim \mathcal{N}(0, \sigma_l)^{d \times p}$, then with probability 99/100 we have:
\begin{equation}
\begin{split}
        &\max_{l=1}^L \| \bm{G}_l \| \leq \sigma_l \cdot \sqrt{d \cdot \log(L)}\,, \\
        &\max_{l=1}^L \| \bm{U}^{\top} \bm{G}_l \| \leq \sigma_l \cdot \sqrt{p \cdot \log(L)}\,.
\end{split}
\end{equation}
We can therefore satisfy the noise conditions of Theorem~\ref{thm:balcanthm} with probability 99/100 if we choose $\eta = \frac{\sigma_l \cdot \sqrt{d \cdot \log(L)}}{\lambda_k - \lambda_{q+1}}$, giving us:
\begin{equation}
    \|(\bm{I} - \bm{X}^L {\bm{X}^L}^{\top}) \cdot \bm{U}_k \| \leq \frac{\sigma_l \cdot \sqrt{d \cdot \log(L)}}{\lambda_k - \lambda_{q+1}}\,,
\end{equation}
which leads us to the statement of the theorem by injecting $\sigma_l = \hat{\Delta}_l \cdot  \epsilon^{-1}\sqrt{4L\log(1/\delta)}$. 
\end{proof}

\subsection{Proof of Theorem~\ref{thm:ippmruninde}}
\label{sec:app_proof5}

We provide here the proof for Theorem~\ref{thm:ippmruninde}:
\ppmruntimeinde*
\begin{proof}
We showed in Equation that $\Delta_l \leq \max_i \|\bm{X}_{i:}^l\|_F$. This quantity depends on values computed during the execution of the algorithm. We now show that we can bound $\max_i \|\bm{X}_{i:}^l\|_F$ with a runtime-independent bound.

Without loss of generality and to simplify notation, we use $\bm{X}$ to denote any matrix $\bm{X}_l$ computed during the execution of the Private Power Method. Let $\bm{X}_{:c} = x_c$ denote a column of $\bm{X}$.

As $\bm{A}$ is an Hermitian matrix, by the spectral theorem, we have $\bm{A} = \bm{U} \bm{D} \bm{U}^\top$, where $\bm{U}$ is unitary (with orthonormal columns) and $\bm{D}$ is diagonal.

As the columns of $\bm{U}$ form a complete basis for $\mathbb{R}^n$, we can write any column $x_c$ of $\bm{X}$ as $\sum_{i=1}^n \alpha_c^i u_i$, where $u_i$ denotes the $i$-th eigenvector of $\bm{A}$, and $ \alpha_c^i$ is a scalar. 

We can then write:
\begin{align}
    \langle x_c,x_e \rangle &= \langle\sum_{i=1}^n \alpha_c^i u_i,\sum_{j=1}^n \alpha_e^j u_j\rangle  \\ 
    &= \sum_{i=1}^n \sum_{j=1}^n  \alpha_c^i \alpha_e^j \langle u_i, u_j\rangle \\ 
    &= \sum_{i=1}^n \alpha_c^i \alpha_e^i.  && \text{\footnotesize(Orth. columns of $\bm{U}$)} 
\end{align}   
$\bm{X}$ is the ``$Q$`` matrix constructed from a Gram-Schmidt QR decomposition, it has therefore orthonormal columns by definition. Therefore, if $c = e$, then $\langle x_c,x_e\rangle = \sum_{i=1}^n {(\alpha_c^i)}^2 =1$. Otherwise, we have $\langle x_c,x_e \rangle = \sum_{i=1}^n \alpha_c^i \alpha_e^i=0$. 

The key is then to notice that we can define a matrix $\bm{B} \in \mathbb{R}^{n \times p}$ with orthonormal columns such that $\bm{X} = \bm{U}\bm{B}$ and $\bm{B}_{jc} = \alpha_c^j$. 

We recall that any matrices $\bm{H}$ and $\bm{J}$, we have $\|\bm{H}\bm{J}\|_F \leq \|\bm{H}\|_2\|\bm{J}\|_F$.

Then, we can bound the norm of any row of $\bm{X}_{i:}$ as:
    \begin{align}
        \|\bm{X}_{i:} \|_F &= \|\bm{X}_{i:}^\top \|_F \,,\\
        & = \| \bm{B}^\top \bm{U}_{i:}^\top\|_F \,,\\
        & \leq \| \bm{B}^\top\|_2 \|\bm{U}_{i:}^\top\|_F.
    \end{align}
$\bm{B}$ has orthonormal columns by construction, therefore 
\begin{align}
    \|\bm{B}\|_2 &= \sqrt{\|\bm{B}^\top \bm{B}\|_2} \\
    &= \sqrt{\| \bm{I}\|_2} \\
    &= 1.
\end{align}

We then have:
\begin{align}
    \|\bm{X}_{i:} \|_F &\leq \| \bm{B}^\top\|_2 \|\bm{U}_{i:}^\top\|_F \\
      &\leq \|\bm{U}_{i:}^\top\|_F \\
     & \leq \|\bm{U}_{i:}\|_F.
\end{align}

Additionally as $\bm{X}_{i:} \in \mathbb{R}^{1 \times n}$, $\max_i \|\bm{X}_{i:}\|_2 = \max_i \|\bm{X}_{i:}\|_F   \leq \max_i (\|\bm{U}_{i:}\|_F) = \mu_1(\bm{A})$.

\textbf{Note:} By Section 2.4 from \citet{woodruff2014sketching}, for any row $\bm{v}$ of a matrix with orthonormal columns $\bm{Z}$, $\|v\|_2 \leq 1$. 

As $\bm{U}$ has orthonormal columns by construction, $\max_i \|\bm{U}_{i:} \|_2 \leq 1$.

We can therefore bound $\max_i\|\bm{X}_{i:} \|_2$ as:
\begin{align}
     \Delta_l  \leq \max_i\|\bm{X}_{i:} \|_2 \leq \mu_1(\bm{A})  \leq 1.
\end{align}

Injecting this bound in Theorem~\ref{thm:proposed_upper_bound} leads us to Theorem~\ref{thm:ippmruninde}, giving us a runtime-independent bound.
\end{proof}

\subsection{Proof of Theorem~\ref{privconvdecalgo}}
\begin{proof}
    It is straightforward to see that if steps 2 and 3 of Algorithm \ref{alg:decentralizedpowermethod} are equivalent to step 1 of Algorithm \ref{alg:centralizedpowermethod}, then the two algorithms are equivalent. 
Recall that $Y^{(i)}_\ell=\bm A^{(i)}\bm X^{\ell-1}+\bm G^{(i)}_\ell$ and $\bm G^{(i)}_\ell\sim \mathcal{N}(0,\Delta_l^2\nu^2)^{n\times p}$. Then steps 2 and 3 of Algorithm \ref{alg:decentralizedpowermethod} correspond to:
\begin{equation}
\begin{split}
    		\bm Y_\ell &= SecAgg({\bm Y_\ell^{(i)}}, \{i | 1 \leq i \leq s\})\\
            &= \sum_{i=1}^s \bm Y_\ell^{(i)} \\
            &= \sum_{i=1}^s (\bm A^{(i)}\bm X_{\ell-1}+\bm G^{(i)}_\ell ) \\
            &= \bm A \bm X^{\ell-1}+\sum_{i=1}^s\bm G^{(i)}_\ell \\
            &= \bm A \bm X^{\ell-1}+\bm G_\ell,
\end{split}
\end{equation}
where $\bm G_\ell\sim \mathcal{N}(0,\Delta_l^2 \cdot  (s \nu^2))^{n\times p}$, and we have $s \nu^2 = \sigma^2$ by definition, completing the equivalence proof.
\end{proof}

\section{Application to recommender systems:}
\label{apprecsys}
Let $s$ be the number of users in our system and $n$ the number of items.
Let $\bm{R} \in \mathbb{R}^{s \times n}$ be the user-item interaction matrix, such that $\bm{R}_{ui} = 1$ only if user $u$ has interacted with item $i$, and $\bm{R}_{ui} = 0$ else.
Let $\bm{U} = \text{Diag}(\bm{R} \cdot \bm{1}_{|\mathcal{I}|})$ be the user degrees matrix, and $\bm{I} = \text{Diag}(\bm{1}_{|\mathcal{I}|}^\top \cdot \bm{R})$ the item degrees matrix.
In \citet{shen2021powerful,choi2023blurring}, the normalized interaction matrix is defined as: 
\begin{align*}
    \bm{R}' = \bm{U}^{-\frac{1}{2}} \bm{R} \bm{I}^{-\frac{1}{2}},
\end{align*}
and the item-item normalized adjacency matrix as:
\begin{equation}
\begin{split}
\bm{P}' & =\tilde{\bm{R}}^\top\tilde{\bm{R}} \\
& = ( \bm{U}^{-\frac{1}{2}} \bm{R})^\top( \bm{U}^{-\frac{1}{2}} \bm{R} ) \\
&=  (\bm{U}^{-\frac{1}{2}}\bm{R} \bm{I}^{-\frac{1}{2}})^\top(\bm{U}^{-\frac{1}{2}} \bm{R} \bm{I}^{-\frac{1}{2}} ). \\
\end{split}
\end{equation}

To simplify our analysis, we consider $\bm{I}$ public and do not use item-wise normalization in the computation of the ideal low pass filter, leaving it for future work. We therefore define $\tilde{\bm{R}} = \bm{U}^{-\frac{1}{2}} \bm{R} $ and $\tilde{\bm{P} } =  (\bm{U}^{-\frac{1}{2}}\bm{R} )^\top(\bm{U}^{-\frac{1}{2}} \bm{R} )$. 

\begin{lemma}
We can use Algorithm~\ref{alg:decentralizedpowermethod} with $\Delta_l = \sqrt{2} \max_i \|\bm{X}_{i:}^l \|_2$ to compute the top-$p$ eigenvectors of $\tilde{\bm{P}}$ in a decentralized setting with under a $(\epsilon, \delta)$-Differential Privacy guarantee.  
\end{lemma}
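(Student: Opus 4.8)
The plan is to reduce the claim to the centralized sensitivity and privacy machinery already established, the only genuinely new ingredient being a bound on how a single user--item interaction propagates to the item--item matrix $\tilde{\bm{P}}$. First I would fix the adjacency model: a sensitive datum is one entry of $\bm{R}$, so adjacent datasets $\bm{R}\sim\bm{R}'$ differ by flipping a single $\bm{R}_{ut}\in\{0,1\}$. Because the per-user normalization $\tilde{\bm{R}}=\bm{U}^{-\frac12}\bm{R}$ is a row operation, this flip alters only row $u$, and since $\tilde{\bm{P}}=\tilde{\bm{R}}^\top\tilde{\bm{R}}=\sum_v \tilde{\bm{R}}_{v:}^\top\tilde{\bm{R}}_{v:}$, only the $u$-th summand changes. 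Hence the update is the symmetric matrix $\bm{C}=\tilde{\bm{P}}'-\tilde{\bm{P}}=\bm{r}'^\top\bm{r}'-\bm{r}^\top\bm{r}$, where $\bm{r}=\tilde{\bm{R}}_{u:}$ and $\bm{r}'=\tilde{\bm{R}'}_{u:}$ are the old and new rows.

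The structural fact I would exploit is that degree normalization makes every row of $\tilde{\bm{R}}$ a unit vector: if user $u$ has degree $d$, each of its $d$ nonzero entries equals $1/\sqrt{d}$, so $\|\bm{r}\|_2^2=d\cdot(1/d)=1$, and likewise $\|\bm{r}'\|_2=1$ after incrementing or decrementing the degree. I would then write out the entries of $\bm{C}$ explicitly for the case of adding an item $t$ (degree $d\to d+1$): entries indexed by the old support decrease in magnitude by $\tfrac{1}{d(d+1)}$, while entries touching $t$ equal $\tfrac{1}{d+1}$. A row-by-row accounting gives $\|\bm{C}_{i:}\|_1=\tfrac{2}{d+1}$ for each old-support row and $\|\bm{C}_{t:}\|_1=1$, so $\sum_i\|\bm{C}_{i:}\|_1^2=\tfrac{4d}{(d+1)^2}+1\leq 2$, the inequality reducing to $(d-1)^2\geq 0$. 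The removal case is the same computation under relabeling, with $d$ replaced by $d-1$. This establishes $\sqrt{\sum_i\|\bm{C}_{i:}\|_1^2}\leq\sqrt 2$, i.e., the recommender update satisfies a $\sqrt2$-scaled version of the adjacency notion in Equation~\eqref{proposedupdatemodel}.

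With this in hand I would invoke the sensitivity computation inside the proof of Theorem~\ref{improvedsensitivitybound}, which bounds $\Delta_l\leq\max_i\|\bm{X}_{i:}^l\|_F\cdot\sqrt{\sum_i\|\bm{C}_{i:}\|_1^2}$; substituting the factor $\sqrt2$ yields exactly $\Delta_l\leq\sqrt2\,\max_i\|\bm{X}_{i:}^l\|_2$, the calibration prescribed in the lemma. Privacy then follows directly from Theorem~\ref{thm:privacyofppm}, whose Gaussian-mechanism and zCDP-composition argument is agnostic to how the per-iteration sensitivity bound is obtained, and the equivalence of Theorem~\ref{privconvdecalgo} transfers both the $(\epsilon,\delta)$-DP guarantee and the convergence toward the top-$p$ eigenvectors of the PSD matrix $\tilde{\bm{P}}$ from Algorithm~\ref{alg:centralizedpowermethod} to Algorithm~\ref{alg:decentralizedpowermethod}.

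I expect the main obstacle to be the explicit row-wise $\ell_1$ accounting of $\bm{C}$ and verifying that the constant $\sqrt2$ is both correct and tight (it is attained at $d=1$). The subtlety is that flipping one entry of $\bm{R}$ is \emph{not} a single-entry change in $\tilde{\bm{P}}$ but a renormalization of an entire user row, so one must track how the global degree rescaling spreads across the support rather than treat the update as an isolated matrix entry. A secondary care point is the degenerate degree-one removal, where the normalization $\bm{U}^{-\frac12}$ is undefined; I would dispense with it by assuming each participating user retains at least one interaction.
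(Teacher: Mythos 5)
Your proposal is correct and takes essentially the same route as the paper's proof: decompose $\tilde{\bm{P}}$ into per-user rank-one contributions so that flipping one interaction perturbs only one user's (renormalized) term, bound $\sqrt{\sum_i \|\bm{C}_{i:}\|_1^2} \leq \sqrt{2}$ by exactly the same row-wise $\ell_1$ accounting (the paper works the deletion case with degree $d$, you work the addition case with $d \to d+1$; the two are identical up to relabeling), and then invoke the sensitivity bound from the proof of Theorem~\ref{improvedsensitivitybound} together with Theorems~\ref{thm:privacyofppm} and~\ref{privconvdecalgo}. Two small points in your favor: your arithmetic is the corrected version of the paper's (its intermediate $\frac{2(d-1)}{d^2}$ should read $\frac{4(d-1)}{d^2}$, which still gives the bound $2$ via $(d-2)^2 \geq 0$), and your explicit handling of the degenerate degree-one deletion, where $\bm{U}^{-\frac{1}{2}}$ is undefined, addresses a case the paper's proof silently skips.
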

\label{prooflemmaapprecsys}
\begin{proof}
    Let $\tilde{\bm{P}}_{ij}$ denote the element of $\tilde{\bm{P}}$ at row $i$ and column $j$ and let $d_{user}(u) =  \sum_{i=0}^{n-1} r_{ui}$. We can write $\tilde{\bm{P}}_{ij}$ as:
\begin{equation}
\begin{split}
\tilde{\bm{P}}_{ij} &= ((\bm{U}^{-\frac{1}{2}}\bm{R} )^\top(\bm{U}^{-\frac{1}{2}} \bm{R} ))_{ij} \\
& = (\bm{U}^{-\frac{1}{2}}\bm{R})^\top)_{i,*}( \bm{U}^{-\frac{1}{2}}\bm{R})_{*,j} \\
&=  ((\bm{U}^{-\frac{1}{2}}\bm{R})_{*,i})^\top( \bm{U}^{-\frac{1}{2}}\bm{R})_{*,j} \\
&=  \sum_{u=0}^{s-1} \frac{1}{\sqrt{d_{user}(u) }}r_{ui}\frac{1}{\sqrt{d_{user}(u) }}r_{uj} \\
&=  \sum_{u=0}^{s-1} \frac{1}{d_{user}(u)} \cdot r_{ui} \cdot r_{uj}. \\
\end{split}
\end{equation}

By noticing that $(\bm{R} ^\top\bm{R} )_{ij} =  \sum_{u} r_{ui} \cdot r_{uj}$, we can deduce that $\tilde{\bm{P}} = \sum_{u} \frac{1}{d_{user}(u)} \bm{R}_u ^\top\bm{R}_u$. Therefore $\tilde{\bm{P}}$ is partitioned among $s$ users as described in Section~\ref{decentrversion}.

Sensitivity: We protect the user at the item-level and use the deletion model of differential privacy to compute the sensitivity of the PPM used with the item-item normalized adjacency matrix ($\bm{A} = \tilde{\bm{P}}$). Therefore we have:
\begin{equation}
    \bm{A}_{ij} =  \sum_{u=0}^{s-1} \frac{1}{d_{user}(u)} \cdot r_{ui} \cdot r_{uj},
\end{equation}
and
\begin{equation}
        \bm{A}'_{ij} =  \sum_{u=0}^{s-1} \frac{1}{d_{user}(u)-1} \cdot r_{ui}' \cdot r_{uj}',
\end{equation}
where $r_{ui}' = r_{ui}$ except for one user-item interaction, \emph{i.e.}, $r_{vk} =1$ but $r_{vk}' = 0$.
Let $\bm{C} = \bm{A} - \bm{A}'$. Let $\mathcal{N}(v)$ be the set of items which user $v$ liked before deletion.
We have:
\begin{align}
    \sum_i \|\bm{C}_{i:}\|_1^2 &= \sum_i (\sum_j |\sum_{u=0}^{s-1} \frac{1}{d_{user}(u)} \cdot r_{ui} \cdot r_{uj} - \sum_{u=0}^{s-1} \frac{1}{d_{user}(u)-1} \cdot r_{ui}' \cdot r_{uj}'|)^2 \\
    &= \sum_i (\sum_j |\frac{1}{d_{user}(v)} \cdot r_{vi} \cdot r_{vj} -  \frac{1}{d_{user}(v)-1} \cdot r_{vi}' \cdot r_{vj}'|)^2 \\
    &= \sum_{i \in \mathcal{N}(v)} (\sum_{j \in \mathcal{N}(v)} |\frac{1}{d_{user}(v)} \cdot r_{vi} \cdot r_{vj} -  \frac{1}{d_{user}(v)-1} \cdot r_{vi}' \cdot r_{vj}'|)^2 \\
    &= \sum_{i \in \{\mathcal{N}(v)\backslash k\}} (\sum_{j \in \mathcal{N}(v)} |\frac{1}{d_{user}(v)} \cdot r_{vi} \cdot r_{vj} -  \frac{1}{d_{user}(v)-1} \cdot r_{vi}' \cdot r_{vj}'|)^2 \\ & \quad \quad + (\sum_{j \in \mathcal{N}(v)} |\frac{1}{d_{user}(v)} \cdot r_{vk} \cdot r_{vj} -  \frac{1}{d_{user}(v)-1} \cdot r_{vk}' \cdot r_{vj}'|)^2.  
\end{align}
We have:
\begin{align}
    &\sum_{i \in \{\mathcal{N}(v)\backslash k\}} (\sum_{j \in \mathcal{N}(v)} |\frac{1}{d_{user}(v)} \cdot r_{vi} \cdot r_{vj} -  \frac{1}{d_{user}(v)-1} \cdot r_{vi}' \cdot r_{vj}'|)^2 \\
    &=\sum_{i \in \{\mathcal{N}(v)\backslash k\}} (\sum_{j \in \{\mathcal{N}(v)\backslash k\}} |\frac{1}{d_{user}(v)} \cdot r_{vi} \cdot r_{vj} -  \frac{1}{d_{user}(v)-1} \cdot r_{vi}' \cdot r_{vj}'| + \frac{1}{d_{user}(v)} \cdot r_{vi} \cdot r_{vk}  )^2 \\
    &=(d_{user}(v) -1)\{(d_{user}(v) -1) |\frac{1}{d_{user}(v)} -  \frac{1}{d_{user}(v)-1} | + \frac{1}{d_{user}(v)}  \}^2 \\
    &=(d_{user}(v) -1)\{|\frac{1}{d_{user}(v)} | + \frac{1}{d_{user}(v)}  \}^2 \\
    &=\frac{2 (d_{user}(v) -1)}{d_{user}(v)^2},
\end{align}
and 
\begin{align}
    &(\sum_{j \in \mathcal{N}(v)} |\frac{1}{d_{user}(v)} \cdot r_{vk} \cdot r_{vj} -  \frac{1}{d_{user}(v)-1} \cdot r_{vk}' \cdot r_{vj}'|)^2 \\
    &= (\sum_{j \in \mathcal{N}(v)} |\frac{1}{d_{user}(v)}\cdot r_{vj} |)^2 \\
    &= (d_{user}(v)|\frac{1}{d_{user}(v)} |)^2 \\
    &= 1. 
\end{align}
By noticing that $\frac{2 (d_{user}(v) -1)}{d_{user}(v)^2} \leq 1$, we can deduce that: \begin{align}
    \sum_i \|\bm{C}_{i:}\|_1^2 &= \sum_{i \in \{\mathcal{N}(v)\backslash k\}} (\sum_{j \in \mathcal{N}(v)} |\frac{1}{d_{user}(v)} \cdot r_{vi} \cdot r_{vj} -  \frac{1}{d_{user}(v)-1} \cdot r_{vi}' \cdot r_{vj}'|)^2 \\ & \quad \quad + (\sum_{j \in \mathcal{N}(v)} |\frac{1}{d_{user}(v)} \cdot r_{vk} \cdot r_{vj} -  \frac{1}{d_{user}(v)-1} \cdot r_{vk}' \cdot r_{vj}'|)^2  \\
    &= \frac{2 (d_{user}(v) -1)}{d_{user}(v)^2} +1\\
    & \leq 2 \\
    & \implies \sqrt{\sum_i \|\bm{C}_{i:}\|_1^2} \leq \sqrt{2}.
\end{align}
By Equation~\ref{finalbound}, 
\begin{align}
     &\Delta_l  \leq \max_i \|\bm{X}_{i:}^l\|_F \sqrt{\sum_i \|\bm{C}_{i:}\|_1^2} \\
     & \implies \Delta_l  \leq \max_i \|\bm{X}_{i:}^l\|_F \sqrt{2}.
\end{align}
\end{proof}

\subsection{Approximation errors comparisons:}

\label{approximation_errors}
As explained in Section~\ref{practicalutility}, GF-CF and BSPM (with its parameter $T_b =1$) use $\bm{U}_p$ to compute the ideal low pass filter and filter the interaction matrix $\bm{R}$, yielding $\bm{R}_p$. Indeed, we have:
\begin{align}
    \bm{R}_p = \bm{R} \cdot \bm{I}^{-\frac{1}{2}} \bm{U}_p \bm{U}_p^\top \bm{I}^{\frac{1}{2}}.
\end{align}

To study the impact of Differential Privacy on our system, we compute approximations of $\bm{U}_p$ (denoted by $\tilde{\bm{U}}_p$) using our proposed decentralized PPM, the PPM versions of \citet{balcan2016improved} or FedPower from \citet{guo2024fedpower}. We have:
\begin{align}
    \tilde{\bm{R}}_p = \bm{R} \cdot \bm{I}^{-\frac{1}{2}} \tilde{\bm{U}}_p \tilde{\bm{U}}_p^\top \bm{I}^{\frac{1}{2}}.
\end{align}
We then define the relative approximation error caused by the use of differential privacy as $\frac{\|\tilde{\bm{R}_p} - \bm{R}_p \|_F}{\|\bm{R}_p\|_F}$.

We use $L=3$ to run the Power Method as it is the default hyper-parameter choice from \citet{shen2021powerful, choi2023blurring}. We use $p=32$ to have acceptable approximation error for reasonable values of $\epsilon$ (5-10). We use the synchronous version of FedPower \citep{guo2024fedpower} to simplify the comparison, \emph{i.e.}, we set their parameter $\mathcal{I}_T = L$. We note that FedPower could be improved by also using Secure Aggregation and therefore reducing the noise necessary for DP. It might also benefit from our sensitivity analysis in the synchronous setting.

We showed the impact of the Differential Privacy parameter $\epsilon$ on the approximation errors for the MovieLens and EachMovie datasets in Figures~\ref{fig:frob_movielens} and~\ref{fig:frob_eachmovie}.The trends for the approximation errors on the Jester dataset are quite similar, as shown in Figure~\ref{fig:frob_jester}. We however note that all methods perform better on this dataset. We hypothesize that this is because the Jester dataset is much more dense compared to the EachMovie and MovieLens datasets, hence the ratio of magnitude of the noise added due to DP compared to the magnitude of the elements of the item-item matrix is smaller on this dataset.

\begin{figure}[htb]
    \centering
    \includegraphics[width=0.9\linewidth]{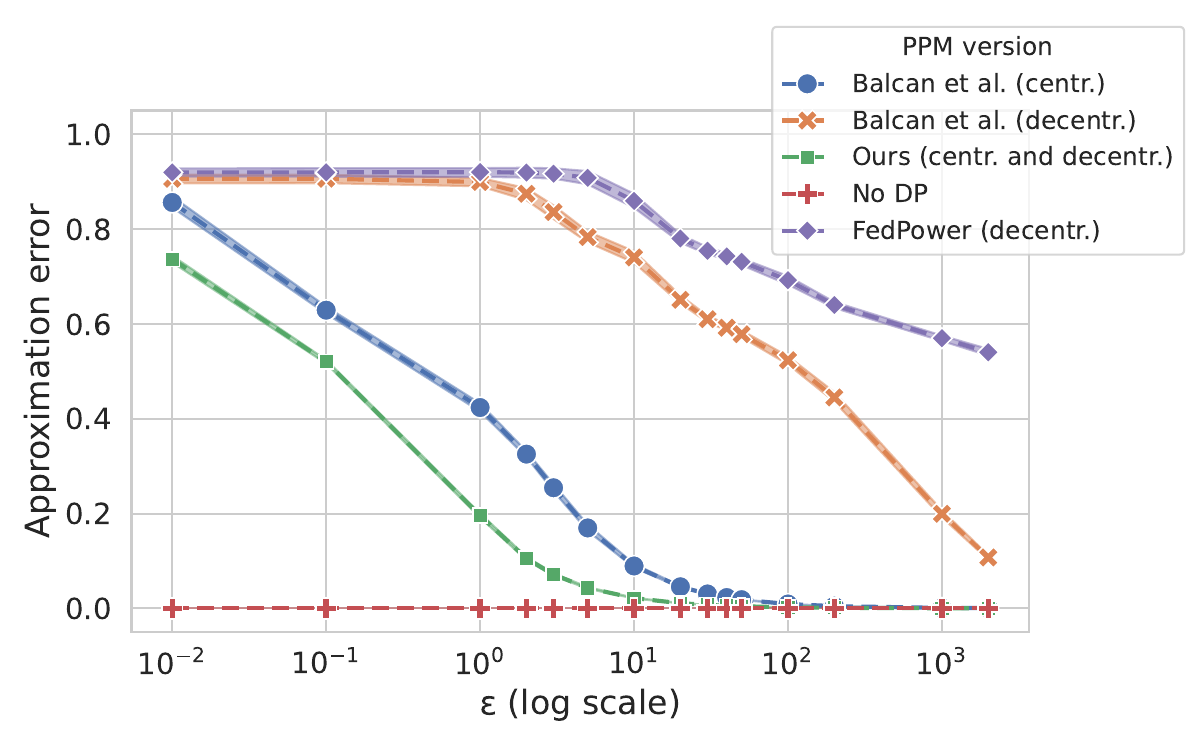}
    \caption{Impact of the Differential Privacy parameter $\epsilon$ on the relative approximation error $\frac{\|\tilde{\bm{R}_p} - \bm{R}_p \|_F}{\|\bm{R}_p\|_F}$ associated to multiple PPM versions, computed on 10 runs with 99\% confidence intervals (computed via bootstrap), with $p=32$ and $L=3$ on the Jester dataset.}
    \label{fig:frob_jester}
\end{figure}
\subsection{Experimental Details}
\label{experimentaldetails}
\paragraph{Datasets.}
We benchmark our method on four standard recommendation datasets: MovieLens-10M, EachMovie, Jester, and Amazon-book. For Amazon-book, we use the publicly available 80\%/20\% train/test splits; for EachMovie, MovieLens-10M and Jester we use the same train/test proportions and therefore apply a 80\%/20\% split at the user level (each user retains at least one test interaction).  Table~\ref{tbl:dataset_stats} summarizes the post-processing statistics.

\begin{table}[ht]
	\centering
	\caption{Dataset statistics after preprocessing: number of users, items, interactions.}
	\label{tbl:dataset_stats}
	\begin{tabular}{lrrr}
		\toprule
		Dataset       & Users  & Items  & Interactions \\
		\midrule
		MovieLens-10M & 71{,}567 & 10{,}677 & 7{,}972{,}582 \\
		EachMovie     & 74{,}425 &  1{,}649 & 2{,}216{,}887 \\
		Jester        & 54{,}906 &    151   & 1{,}450{,}010 \\
		Amazon-book   & 52{,}643 &  91{,}599 & 2{,}984{,}108 \\
		\bottomrule
	\end{tabular}
\end{table}

\paragraph{Hyperparameters.}
We fix the number of power iterations to \(L=3\) to correspond to PyTorch’s \texttt{svd\_lowrank} approximate basis subroutine (Algorithm 4.3 in \citep{halko2011finding}).  No further hyperparameter tuning is performed.

\paragraph{Experimental Protocol and Runs.}
Each “run” uses a different Pseudo Random Number Generator seed for the Gaussian initialization \(\bm G_0\).  We perform 10 runs per setting (for one given $\epsilon$ value, all methods are initialized with the same Gaussian matrix), varying only the PRNG seed across runs.

\paragraph{Confidence Intervals.}
We report the mean and a 99\% confidence interval over the 10 runs, estimated via nonparametric bootstrap (1\,000 resamples).

\paragraph{Computational Resources.}
All experiments ran on a 16-core CPU.  Figure~1 (10 runs × 14 \(\epsilon\) values × 5 methods) requires $\approx 30$ min of wall-clock time.

\section{Matrix-agnostic comparison of the runtime-dependent bounds}
\label{agnmatrcomp}
We analyze the tightness of our proposed runtime-dependent bound compared to the previous bound in a matrix-agnostic manner, focusing on the first iteration of the algorithm. 

At iteration $ l = 0 $, the leading singular vectors are initialized as $ \bm{X}^0 = \text{Q}(\bm{\Omega}) $, where $ \bm{\Omega} \sim \mathcal{N}(0,1)^{n \times k} $ and $ \text{Q}(\bm{\Omega}) $ denotes the orthonormal matrix obtained from the QR decomposition of $ \bm{\Omega} $. Since $ \bm{X}^0 $ is independent of the matrix $ \bm{A} $, it allows us to assess the relative tightness of the proposed bound $ \hat{\Delta}_0 $ compared to the previous bound $ \Delta_0^{prior} $ at the first step. We note that when the randomized power method runs for only one step, it is akin to sketching \citep{halko2011finding}.

Let $r(k, n)$ be the ratio of the expected values of the two bounds, as a function of the desired number of eigenvectors $k$ and the matrix dimension $n$, i.e. $ r(k, n) = \frac{\mathbb{E}[\Delta_0^{prior}]}{\mathbb{E}[\hat{\Delta}_0]}.$

\textbf{Theoretical approximation of $r(k,n)$.}
We derive an asymptotic approximation of $r(k,n)$ using Approximation~\ref{exp_upp_bounds}.

\begin{approximation}
    \label{exp_upp_bounds} Let $\bm{X}^0 = \text{ Q}(\bm{\Omega})$ where $\bm{\Omega} \sim \mathcal{N}(0,1)^{n \times k}$ and $\text{ Q}(\bm{\Omega})$ is the orthonormal matrix $\bm{Q}$ from the $QR$ decomposition of $\bm{\Omega}$. Let $\mu = \frac{n}{n-2}$ and variances $\sigma^2=\frac{2n^2 (n-1)}{(n-2)^2 (n-4)}$. We can approximate $\mathbb{E}[\hat{\Delta}_0]^2$ and $\mathbb{E}[\Delta_0^{prior}]^2$  as follows:
\begin{align}
    & \mathbb{E}[\Delta_l^{prior}]^2 &\approx \left(\sqrt{2\sigma^2 \cdot \log(kn)\frac{k^2 }{n^2}}  + \frac{k\mu}{n}\right) \text{ and }
     \mathbb{E}[\hat{\Delta}_l]^2 &\approx \left(\sqrt{2\sigma^2 \cdot \log(n)\frac{k}{n^2}}  + \frac{k\mu}{n}\right).
        \vspace{-5mm}
\end{align}
\end{approximation}
\label{sec:app_proof4}
We provide here the derivation for Approximation~\ref{exp_upp_bounds}:
\begin{proof}
    Let $\bm{M} \in \mathbb{R}^{n \times k}$ with i.i.d $\mathcal{N}(0,1)$-distributed entries. Let $\bm{M}=\bm{Q}\bm{R}$ be its QR factorization (by definition $\bm Q$ is orthogonal and $\bm R$ upper triangular). By the Barlett decomposition theorem \citep{muirhead2009aspects}, we know that $\bm{Q}$ is a random matrix distributed uniformly in the Stiefel manifold $\mathbb{V}_{k,n}$. Then by Theorem~2.2.1 of \citet{chikuse2012statistics}, we know that a random matrix $\bm{Q}$ uniformly distributed on $\mathbb{V}_{k,n}$ can be expressed as $\bm{Q} = \bm{Z}(\bm{Z}^\top \bm{Z})^{-\frac{1}{2}}$ with $\bm{Z}$ another matrix with i.i.d $\mathcal{N}(0,1)$-distributed entries. We approximate $\bm{Z}^\top \bm{Z}$ as a diagonal matrix and remark that its diagonal elements are distributed as random chi-squared variables with $n$ degrees of liberty. For a matrix $\bm{A}$, we denote by $\bm{A}^{\circ n}$ the elementwise Hadamard exponentiation. Then we have:
\begin{align}
    \bm{Q} _{ij}^2 &=  (\bm{Z}(\bm{Z}^\top \bm{Z})^{-\frac{1}{2}})_{ij}^2 \\
     &= (\bm{Z}^{\circ 2}((\bm{Z}^\top \bm{Z})^{-\frac{1}{2}})^{\circ 2} )_{ij} \\
    &= (\bm{Z}^{\circ 2}(\bm{Z}^\top \bm{Z})^{-1})_{ij}.
\end{align} 
We remark that each element of $\bm{Z}^{\circ 2}$ is distributed as a chi-squared variable with one degree of freedom, and therefore $n \cdot \bm{Q} _{ij}^2$ is distributed as an i.i.d $F(1, n)$ random variable by definition. For large $n$, we can approximate these $F$ variables as Gaussians with mean $\mu = \frac{n}{n-2}$ and variances $\sigma^2=\frac{2n^2 (n-1)}{(n-2)^2 (n-4)}$. Therefore $\bm{Q} _{ij}^2 \sim \mathcal{N}(\frac{\mu}{n},\frac{\sigma^2}{n^2})$ and:
\begin{align}
    & k \bm{Q} _{ij}^2 \sim \mathcal{N}(\frac{k \mu}{n},\frac{k^2 \sigma^2}{n^2})\,, && \sum_j \bm{Q} _{ij}^2 \sim \mathcal{N}(\frac{k\mu}{n},\frac{k\sigma^2}{n^2}).
\end{align}
We can approximate the expectation of the maximum (noted as m) of $d$ Gaussian variables distributed as $\mathcal{N}(\mu_2, \sigma_2^2)$ by $m = \sigma_2 \sqrt{2\cdot \log(d)} + \mu_2$ by Lemma 2.3 from \citet{massart2003concentration}. We therefore get approximations of $\mathbb{E}[\max_{ij} k\bm{Q} _{ij}^2]$ and $\mathbb{E}[\max_{i} \|\bm{Q} _{i:}\|^2]$. We assume that the variances of $\Delta_l^{prior}$ and $\hat{\Delta}_l$ are small (because $n$ and $k$ are large) and therefore:
\begin{align}
    & \mathbb{E}[\Delta_l^{prior}]^2 \approx \mathbb{E}[(\Delta_l^{prior})^2] = \sqrt{\frac{k^2 \sigma^2}{n^2}} \sqrt{2\cdot \log(kn)} + \frac{k\mu}{n}\,, \\
    & \mathbb{E}[\hat{\Delta}_l]^2 \approx \mathbb{E}[(\hat{\Delta}_l)^2] = \sqrt{\frac{k\sigma^2}{n^2}} \sqrt{2\cdot \log(n)} + \frac{k\mu}{n}.
\end{align}
\end{proof}

\paragraph{Empirical approximation of $r(k,n)$.}

To compare the tightness of our proposed bound $ \hat{\Delta}_0 $ to the previous bound $ \Delta_0^{prior} $, we perform both theoretical and empirical analyses at the first iteration, where the noise scaling depends only on the random initialization $ \bm{X}^0 $. Since $ \bm{X}^0 $ is independent of $ \bm{A} $, we can estimate $ \mathbb{E}[\Delta_0^{prior}] $ and $ \mathbb{E}[\hat{\Delta}_0] $ by sampling a random Gaussian matrix $\bm{X}^0$.

\noindent{\textit{Experiment:}} $r(k,n)$ depends on the number of factors $k$ and on $n$, where ($n \times n$) is the size of $\bm{A}$. We therefore seek to compute it for multiple values of $n$ ($n \in \{8000,12000\}$) and a range of values for $k$ (between 64 and 4000, with steps of 64), for each value of $n$. We use Approximation~\ref{exp_upp_bounds} to approximate it theoretically. To measure it empirically, we compute $\Delta_0^{prior}$ and $\hat{\Delta}_0$ based on $\bm{X}^0$ in the first step of the algorithm (it is independent of the matrix of interest $\bm{A}$), for $t$ runs of the algorithm. We can then use the $t$ measures to estimate $\mathbb{E}[\Delta_0^{prior}]$ and $\mathbb{E}[\hat{\Delta}_0]$. We claim that $t=5$ is enough to see a general trend, as we have estimates for many ($k,n$) couples.  We can also compare the empirical estimates to the proposed asymptotic approximations specified in Approximation \ref{exp_upp_bounds}. 

\begin{figure}[htb]
    \centering
    \subfigure[$n=8000$.]{
        \includegraphics[width=0.475\linewidth]{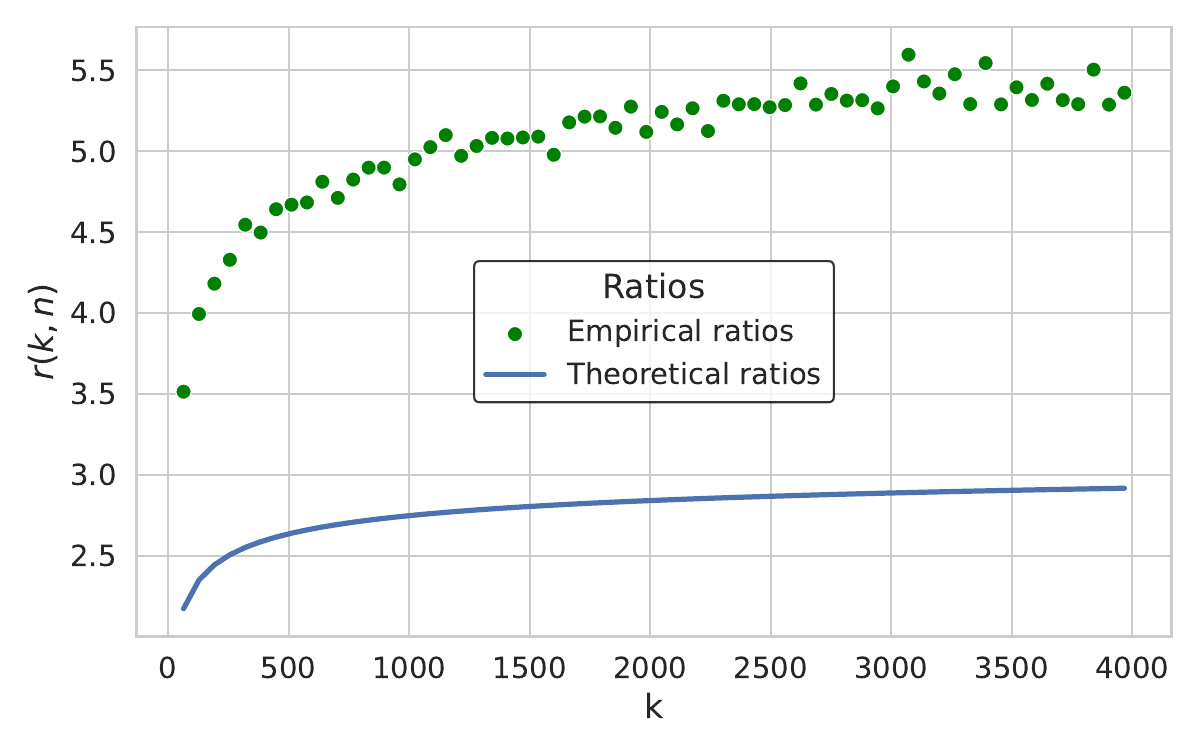}
        \label{fig:r_kn_8k}
        }
    \hfill
    \subfigure[$n=12000$.]{
        \includegraphics[width=0.475\linewidth]{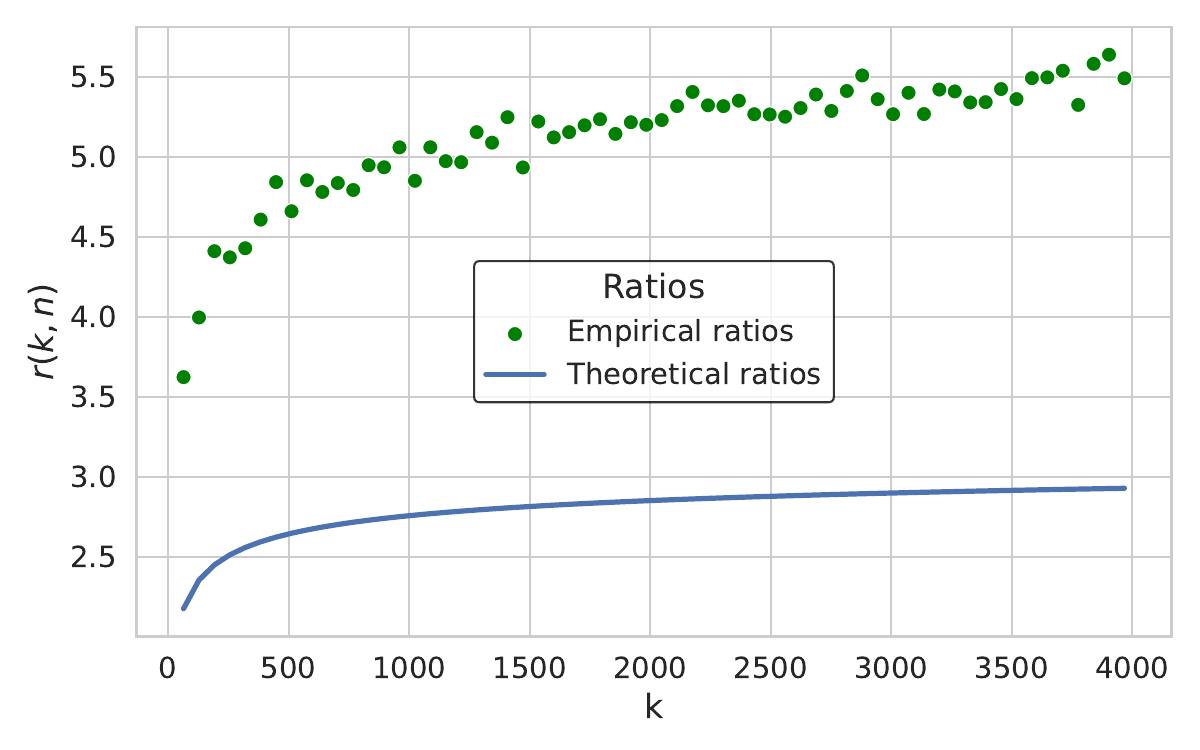}
        \label{fig:r_kn_12k}
    }
    \caption{Comparison of empirically (green dots) and theoretically (blue line) estimated $r(k, n)$ ratios for $k$ ranging between 64 and 4000 with a step of 64. Empirical ratios are estimated using $t=5$ runs of the first step of our algorithm, while theoretical ratios are based on Theorem~\ref{exp_upp_bounds}. Results are shown for two different values of $n$: 8000 and 12000.}
    \label{fig:r_kn_comparison}
\end{figure}

Figures~\ref{fig:r_kn_8k} and~\ref{fig:r_kn_12k} present the comparison between empirical (blue dots) and theoretical (green line) estimates of $r(k,n)$ for $n=8000$ and $n=12000$, respectively. The empirical ratios are calculated from the averages over the $t=5$ runs for each value of $k$. The results indicate that both empirically and theoretically the proposed noise scaling $\hat{\Delta}_l$ is much tighter than $\Delta_l^{prior}$ at the first step of the algorithm. Our theoretical approximation is conservative and underestimates how much tighter the bound is initially, compared to what we observe in practice. The proposed bound is tighter by a multiplicative factor on the first step and therefore drastically reduce the impact of the noise introduced by DP at the first iteration. By noting that the power method is usually run for very few steps ($L$ is usually in the range of $1$-$5$), this result complements our general convergence bounds on the overall algorithm derived in Theorems~\ref{thm:proposed_upper_bound} and  \ref{thm:ippmruninde} and gives further intuition on the tightness and usefulness of our proposed sensitivity bound.
\end{document}